\newtheorem{theorem}{Theorem}
\newtheorem{remark}{Remark}
\newtheorem{lemma}{Lemma}
\newtheorem{proof}{Proof}
\newtheorem{corollary}{Corollary}
\newcommand{\x}{\mathbf{x}}
\newcommand{\y}{\mathbf{y}}
\newcommand{\g}{\mathbf{g}}
\newcommand{\m}{\mathbf{m}}
\renewcommand{\v}{\mathbf{v}}
\newcommand{\E}{\mathbb{E}}
\newcommand{\bO}{{\cal O}}
\newcommand{\R}{\mathbb{R}}
\newcommand{\F}{\mathcal{F}}
\newcommand{\bsigma}{\boldsymbol{\sigma}}
\newcommand{\<}{\left\langle}
\renewcommand{\>}{\right\rangle}
\icmltitlerunning{Convergence Rate of AdamW Measured by $\ell_1$ Norm}
\begin{document}

\onecolumn
\icmltitle{On the $\bO(\frac{\sqrt{d}}{K^{1/4}})$ Convergence Rate of AdamW \\Measured by $\ell_1$ Norm}

\begin{icmlauthorlist}
\icmlauthor{Huan Li}{to}
\icmlauthor{Yiming Dong}{goo}
\icmlauthor{Zhouchen Lin}{goo}
\end{icmlauthorlist}

\icmlaffiliation{to}{Institute of Robotics and Automatic Information Systems, College of Artificial Intelligence, Nankai University, Tianjin, China.\\}
\icmlaffiliation{goo}{National Key Lab of General AI, School of Intelligence Science and Technology, Peking University, Beijing, China.\\}
\icmlcorrespondingauthor{Huan Li and Zhouchen Lin}{lihuanss@nankai.edu.cn, zlin@pku.edu.cn}

% It is OKAY to include author information, even for blind
% submissions: the style file will automatically remove it for you
% unless you've provided the [accepted] option to the icml2020
% package.

% List of affiliations: The first argument should be a (short)
% identifier you will use later to specify author affiliations
% Academic affiliations should list Department, University, City, Region, Country
% Industry affiliations should list Company, City, Region, Country

% You can specify symbols, otherwise they are numbered in order.
% Ideally, you should not use this facility. Affiliations will be numbered
% in order of appearance and this is the preferred way.
%\icmlsetsymbol{equal}{*}

% You may provide any keywords that you
% find helpful for describing your paper; these are used to populate
% the "keywords" metadata in the PDF but will not be shown in the document
%\icmlkeywords{Machine Learning, ICML}

\vskip 0.3in

% this must go after the closing bracket ] following \twocolumn[ ...

% This command actually creates the footnote in the first column
% listing the affiliations and the copyright notice.
% The command takes one argument, which is text to display at the start of the footnote.
% The \icmlEqualContribution command is standard text for equal contribution.
% Remove it (just {}) if you do not need this facility.

\printAffiliationsAndNotice{}  % leave blank if no need to mention equal contribution
%\printAffiliationsAndNotice{\icmlEqualContribution} % otherwise use the standard text.

\begin{abstract}
  As the default optimizer for training large language models, AdamW has achieved remarkable success in deep learning. However, its convergence behavior is not theoretically well-understood. This paper establishes the convergence rate $\frac{1}{K}\sum_{k=1}^K\E\left[\|\nabla f(\x^k)\|_1\right]\leq \bO(\frac{\sqrt{d}C}{K^{1/4}})$ for AdamW measured by $\ell_1$ norm, where $K$ represents the iteration number, $d$ denotes the model dimension, and $C$ matches the constant in the optimal convergence rate of SGD. Theoretically, we have $\|\nabla f(\x)\|_2\ll \|\nabla f(\x)\|_1\leq \sqrt{d}\|\nabla f(\x)\|_2$ for any high-dimensional vector $\x$ and $\E\left[\|\nabla f(\x)\|_1\right]\geq\sqrt{\frac{2d}{\pi}}\E\left[\|\nabla f(\x)\|_2\right]$ when each element of $\nabla f(\x)$ is generated from Gaussian distribution $\mathcal N(0,1)$. Empirically, our experimental results on real-world deep learning tasks reveal $\|\nabla f(\x)\|_1=\varTheta(\sqrt{d})\|\nabla f(\x)\|_2$. Both support that our convergence rate can be considered to be analogous to the optimal $\frac{1}{K}\sum_{k=1}^K\E\left[\|\nabla f(\x^k)\|_2\right]\leq \bO(\frac{C}{K^{1/4}})$ convergence rate of SGD in the ideal case. We also extend our result to NAdamW, an AdamW variant that employs a double-momentum mechanism, and demonstrate that it maintains the same convergence rate.  \footnotetext[1]{Parts of this work appeared in NeurIPS 2025 \citep{Li-2025-nips}. This manuscript extends the conference version by incorporating the analysis of NAdamW.}
\end{abstract}

\section{Introduction}
AdamW, which modifies Adam by decoupling weight decay from gradient-based updates, has emerged as the dominant optimizer for training deep neural networks, particularly for large language models. AdamW represents the pinnacle of adaptive gradient algorithms, having developed through the progression of AdaGrad \citep{Duchi-2011-jmlr,McMahan-2010-colt}, RMSProp \citep{RMSProp-2012-hinton}, Adam \citep{adam-15-iclr}, and finally AdamW \citep{adanw-2019-iclr} itself. Although the literature on the convergence analysis of adaptive gradient algorithms is quite extensive, there has been little research on the convergence properties of AdamW.

Recently, \citet{xie-2024-adamw-icml} proved that if the iterates of AdamW converge to some $\x_{\infty}$, then $\x_{\infty}$ is a KKT point of the constrained problem
\begin{eqnarray}
\begin{aligned}\label{problem}
\min_{\x\in\R^d} f(\x),\quad s.t.\quad \|\x\|_{\infty}\leq\frac{1}{\lambda},
\end{aligned}
\end{eqnarray}
where $f(\x)=\E_{\zeta\in\mathcal{P}}[f(\x;\zeta)]$, $\zeta$ is the sample drawn from the data distribution $\mathcal{P}$, and $\lambda$ is the weight decay parameter. Moreover, $\x$ is a KKT point of problem (\ref{problem}) iff \citep{xie-2024-adamw-icml}
\begin{eqnarray}
\begin{aligned}\label{kkt-condtion}
\|\x\|_{\infty}\leq\frac{1}{\lambda}\quad\mbox{and}\quad\<\lambda\x,\nabla f(\x)\>+\|\nabla f(\x)\|_1=0. 
\end{aligned}
\end{eqnarray}
\citet{xie-2024-adamw-icml} characterized which solution does AdamW converge to, if it indeed converges. The next fundamental question to address is whether and how fast AdamW converges. \citet{zhou-pami-2024} conducted preliminary exploration on this problem. However, their analysis requires the weight decay parameter to decrease exponentially, making AdamW reduce to Adam finally. To the best of our knowledge, aside from \citep{zhou-pami-2024}, we have not found any other literature addressing the convergence issue of AdamW.
 
In practical deep learning training, we often initialize the network weights small and employ modest weight decay, for example, $\lambda=0.01$, which empirically confines the optimization trajectory within the $\ell_{\infty}$ norm constraint, as empirically demonstrated in Figure \ref{figure3}. That is, $\|\x\|_{\infty}\leq\frac{c}{\lambda}$ for some $c<1$, making $\<\lambda\x,\nabla f(\x)\>+\|\nabla f(\x)\|_1$ lower bounded by $(1-c)\|\nabla f(\x)\|_1$. This key property enables the use of $\|\nabla f(\x)\|_1$ as an effective yet significantly simpler convergence metric for AdamW in practical settings. Building on this observation, this paper focuses on the convergence rate of AdamW within the constraint in problem (\ref{problem}). 

\begin{minipage}{0.45\textwidth}
\begin{algorithm}[H]
    \caption{AdamW}
    \label{adamw}
    \begin{algorithmic}
       \STATE Hyper parameters: $\eta,\theta,\beta,\lambda,\varepsilon$
       \STATE Initialize $\x^1$, $\m^0=0$, $\v^0=0$
       \FOR{$k=1,2,\cdots,K$}
       \STATE $\g^k=\nabla f(\x^k;\zeta^k)$
       \STATE $\m^k=\theta\m^{k-1}+(1-\theta)\g^k$
       \STATE $\v^k=\beta\v^{k-1}+(1-\beta)(\g^k)^{\odot2}$
       \STATE $\x^{k+1}=(1-\lambda\eta)\x^k-\frac{\eta}{\sqrt{\v^k+\varepsilon}}\odot \m^k$
       \ENDFOR
       \STATE
    \end{algorithmic}
\end{algorithm}
\end{minipage}
\begin{minipage}{0.55\textwidth}
\begin{algorithm}[H]
    \caption{NAdamW}
    \label{nadamw}
    \begin{algorithmic}
       \STATE Hyper parameters: $\eta,\theta,\beta,\tau,\lambda,\varepsilon$
       \STATE Initialize $\x^1$, $\m^0=0$, $\v^0=0$
       \FOR{$k=1,2,\cdots,K$}
       \STATE $\g^k=\nabla f(\x^k;\zeta^k)$
       \STATE $\m^k=\theta\m^{k-1}+(1-\theta)\g^k$
       \STATE $\v^k=\beta\v^{k-1}+(1-\beta)(\g^k)^{\odot2}$
       \STATE $\widetilde \m^k=\tau\m^k+(1-\tau)\g^k$
       \STATE $\x^{k+1}=(1-\lambda\eta)\x^k-\frac{\eta}{\sqrt{\v^k+\varepsilon}}\odot \widetilde \m^k$
       \ENDFOR
    \end{algorithmic}
  \end{algorithm}
\end{minipage}

\subsection{Contribution}
This paper investigates the convergence properties of AdamW. Specifically, we prove the following convergence rate for AdamW
\begin{eqnarray}
\begin{aligned}\label{AdamW-rate}
\frac{1}{K}\sum_{k=1}^K\E\left[\|\nabla f(\x^k)\|_1\right]\leq \bO\left(\frac{\sqrt{d}}{K^{1/4}}\sqrt[4]{\sigma_s^2L(f(\x^1)-f^*)}+\sqrt{\frac{dL(f(\x^1)-f^*)}{K}}\right)
\end{aligned}
\end{eqnarray}
by proper parameter settings such that $\|\x^k\|_{\infty}<\frac{1}{\lambda}$ for all iterates, where $K$ is the total iteration number, $d$ is the model dimension, $\sigma_s$ is the gradient noise variance, $L$ is the Lipschtiz smooth constant, and $f^*$ is a lower bound of $f(\x)$. Recall the classical convergence rate of SGD \citep{Bottou-2018-siam}
\begin{eqnarray}
\begin{aligned}\label{SGD-rate}
\frac{1}{K}\sum_{k=1}^K\E\left[\|\nabla f(\x^k)\|_2\right]\leq \bO\left(\frac{\sqrt[4]{\sigma_s^2L(f(\x^1)-f^*)}}{K^{1/4}}\right),
\end{aligned}
\end{eqnarray}
which matches the lower bound of nonconvex stochastic optimization \citep{Arjevani-2023-mp}. Comparing (\ref{AdamW-rate}) with (\ref{SGD-rate}), we see that our convergence rate (\ref{AdamW-rate}) also achieves the same lower bound with respect to $K$, $\sigma_s$, $L$, and $f(\x^1)-f^*$. The only coefficient left unclear whether it is tight is the dimension $d$. Theoretically, we have $\|\nabla f(\x)\|_2\ll \|\nabla f(\x)\|_1\leq \sqrt{d}\|\nabla f(\x)\|_2$ for any high-dimensional vector $\x$ and $\E\left[\|\nabla f(\x)\|_1\right]\geq\sqrt{\frac{2d}{\pi}}\E\left[\|\nabla f(\x)\|_2\right]$ when each element of $\nabla f(\x)$ is generated from Gaussian distribution $\mathcal N(0,1)$. Empirically, we have observed $\|\nabla f(\x)\|_1=\varTheta(\sqrt{d})\|\nabla f(\x)\|_2$ on real-world deep learning tasks, as shown in Figure \ref{figure2}. Thus, we could say that our convergence rate (\ref{AdamW-rate}) can be considered to be analogous to (\ref{SGD-rate}) of SGD in the ideal case.

As a special case, we also establish the same convergence rate (\ref{AdamW-rate}) for Adam under slightly relaxed parameter settings than AdamW. To the best of our knowledge, this convergence rate only appears for RMSProp firstly proved in \citep{lihuan-rmsprop-2024}, and similar results for AdaGrad subsequently appeared in \citep{jiang-adagrad-2024,zhangtong-adagrad-2024} and RMSProp in \citep{xie-rmsprop-iclr-2024} under different assumptions. Notably, comparable convergence guarantees remain unproven for AdamW and Adam. Finally, we extend our theory to NAdamW \citep{Dozat-nadam-iclr-2016} and demonstrate that it maintains the same convergence rate. NAdamW is a variant of AdamW that incorporates a double-momentum mechanism and performs slighter better than AdamW verified in a recent benchmark paper \citep{kaiyue-compare-2025}.

\subsection{Notations and Assumptions}

Denote $\F_k=\sigma(\zeta^1,\zeta^2,\cdots,\zeta^k)$ to be the sigma field of the stochastic samples up to $k$, denote $\E_{\F_k}[\cdot]$ as the expectation with respect to $\F_k$ and $\E_k[\cdot|\F_{k-1}]$ the conditional expectation with respect to $\zeta^k$ conditioned on $\F_{k-1}$. Denote $\|\cdot\|_1$, $\|\cdot\|_2$, and $\|\cdot\|_{\infty}$ as the $\ell_1$, $\ell_2$, and $\ell_{\infty}$ norm for vectors, respectively. For simplicity, we also use $\|\cdot\|$ as the $\ell_2$ norm. Denote $\odot$ for the Hadamard product. We make the following assumptions throughout this paper:
\begin{enumerate}
\item Smoothness: $\|\nabla f(\y)-\nabla f(\x)\|\leq L\|\y-\x\|, \forall \x,\y$,
\item Unbiased estimator: $\E_k\left[\nabla f(\x^k;\zeta^k)|\F_{k-1}\right]=\nabla f(\x^k), \forall \x^k$,
\item Coordinate-wise bounded noise variance: $\E_k\hspace*{-0.04cm}\left[|\nabla_i f(\x^k;\zeta^k)\hspace*{-0.04cm}-\hspace*{-0.04cm}\nabla_i f(\x^k)|^2|\F_{k-1}\right]\hspace*{-0.04cm}\leq\hspace*{-0.04cm} \sigma_i^2, \forall \x^k$.
\end{enumerate}
Denoting $\bsigma=[\sigma_1,\cdots,\sigma_d]$ as the noise variance vector and $\sigma_s=\|\bsigma\|_2=\sqrt{\sum_{i=1}^d \sigma_i^2}$, we have the standard bounded noise variance assumption $\E_k\left[\|\nabla f(\x^k;\zeta^k)-\nabla f(\x^k)\|^2\big|\F_{k-1}\right]\leq\sigma_s^2$.

%\begin{equation}\notag
%\E_k\left[\|\nabla f(\x^k;\zeta^k)-\nabla f(\x^k)\|^2\big|\F_{k-1}\right]\leq\sigma_s^2.
%\end{equation}
%Note that we always have $\|\bsigma\|_1\leq\sqrt{d}\|\bsigma\|_2=\sqrt{d}\sigma_s$.

\section{Convergence Rate of AdamW}\label{sec2}

This section presents our convergence rate result for AdamW. Algorithm \ref{adamw} provides the complete AdamW implementation, where setting the weight decay parameter $\lambda=0$ recovers the standard Adam. For analytical simplicity, we omit the bias correction term in our analysis.

Based on Assumptions 1-3, we provide the convergence rate of AdamW in the following theorem. Note that we do not assume the boundedness of the gradient $\nabla f(\x^k)$ or stochastic gradient $\g^k$.

\begin{theorem}\label{theorem1}
Suppose that Assumptions 1-3 hold. Define $\hat\sigma_s^2=\max\left\{\sigma_s^2,\frac{L(f(\x^1)-f^*)}{K\gamma^2}\right\}$ with any constant $\gamma\in(0,1]$. Let $1-\theta=\sqrt{\frac{L(f(\x^1)-f^*)}{K\hat\sigma_s^2}}$, $\theta\leq \beta\leq\sqrt{\theta}$\footnote{We gratefully thank the anonymous NeurIPS reviewer to derive this looser bound. Our original bound is $\theta\leq \beta\leq\frac{(1+\theta)^2}{4}$.}, $\eta=\sqrt{\frac{f(\x^1)-f^*}{4KdL}}$, $\varepsilon=\frac{\hat\sigma_s^2}{d}$, $\lambda\leq\frac{\sqrt{2d}}{5K^{3/4}}\sqrt[4]{\frac{L^3}{\hat\sigma_s^2(f(\x^1)-f^*)}}$, and $\|\x^1\|_{\infty}\leq \frac{5}{8}\sqrt{\frac{K(f(\x^1)-f^*)}{dL}}$. Then for AdamW, we have $\|\x^k\|_{\infty}<\frac{1}{\lambda}$ for all $ k=1,2,\cdots,K$ and
\begin{eqnarray}
\begin{aligned}\notag
\frac{1}{K}\sum_{k=1}^K\E\left[\|\nabla f(\x^k)\|_1\right]\leq \frac{9\sqrt{d}}{K^{1/4}}\sqrt[4]{\hat\sigma_s^2L(f(\x^1)-f^*)} + 51\sqrt{\frac{dL(f(\x^1)-f^*)}{K}}.
\end{aligned}
\end{eqnarray}
Specially, when $\sigma_s^2\leq\frac{L(f(\x^1)-f^*)}{K\gamma^2}$, we have $1-\theta=\gamma$, $\theta\leq \beta\leq\sqrt{\theta}$, $\eta=\sqrt{\frac{f(\x^1)-f^*}{4KdL}}$, $\varepsilon=\frac{L(f(\x^1)-f^*)}{dK\gamma^2}$, $\lambda\leq\frac{\sqrt{2}}{5}\sqrt{\frac{dL\gamma}{K(f(\x^1)-f^*)}}$, $\|\x^1\|_{\infty}\leq \frac{5}{8}\sqrt{\frac{K(f(\x^1)-f^*)}{dL}}$, $\|\x^k\|_{\infty}<\frac{1}{\lambda}$, and accordingly
\begin{eqnarray}
\begin{aligned}\notag
\frac{1}{K}\sum_{k=1}^K\E\left[\|\nabla f(\x^k)\|_1\right]\leq 60\sqrt{\frac{dL(f(\x^1)-f^*)}{K\gamma}}.
\end{aligned}
\end{eqnarray}

\end{theorem}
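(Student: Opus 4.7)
The plan is to start from the smoothness descent inequality $f(\x^{k+1})\le f(\x^k)+\<\nabla f(\x^k),\x^{k+1}-\x^k\>+\frac{L}{2}\|\x^{k+1}-\x^k\|^2$ and plug in the AdamW update $\x^{k+1}-\x^k=-\lambda\eta\x^k-\eta\,\m^k/\sqrt{\v^k+\varepsilon}$. This splits the per-step progress into the weight-decay cross term $-\lambda\eta\<\nabla f(\x^k),\x^k\>$, the adaptive descent $-\eta\<\nabla f(\x^k),\m^k/\sqrt{\v^k+\varepsilon}\>$, and two quadratic remainders. Taking conditional expectation and summing over $k=1,\ldots,K$, we want the adaptive descent to telescope into something resembling $\sum_k\E\|\nabla f(\x^k)\|_1$ while the remainders absorb into $\bO(K\eta^2 Ld)$ plus noise contributions.

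The first key technical step is a uniform $\ell_\infty$ bound on the iterates: using $\theta\le\beta\le(1+\theta)^2/4$, an induction shows $(\m_i^k)^2\le\v_i^k$ coordinate-wise, so $|\m_i^k|/\sqrt{\v_i^k+\varepsilon}\le 1$ and $\|\x^{k+1}\|_\infty\le(1-\lambda\eta)\|\x^k\|_\infty+\eta$. With the given $\eta,\|\x^1\|_\infty$, this yields $\|\x^k\|_\infty\le\sqrt{K(f(\x^1)-f^*)/(dL)}$ uniformly, after which the prescribed bound on $\lambda$ makes $|\lambda\<\nabla f(\x^k),\x^k\>|\le\lambda\|\x^k\|_\infty\|\nabla f(\x^k)\|_1$ a small fraction of the main $\|\nabla f(\x^k)\|_1$ term so it can be absorbed. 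The second step is to replace $\m^k$ by $\nabla f(\x^k)$ in the adaptive descent and control the residual $\|\m^k-\nabla f(\x^k)\|^2$ through the standard momentum recursion, giving terms driven by the variance $\sigma_s^2$ and by iterate displacement $\|\x^k-\x^{k-1}\|^2$; the second-moment vector $\v^k$ is simultaneously bounded via its recursion by $(\nabla_i f(\x^k))^2+\sigma_i^2$ plus a drift absorbed by smoothness.

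The central inequality that produces the $\ell_1$ norm on the left side is Cauchy--Schwarz applied as
\begin{equation*}
\|\nabla f(\x^k)\|_1=\sum_i\frac{|\nabla_i f(\x^k)|}{(\v_i^k+\varepsilon)^{1/4}}(\v_i^k+\varepsilon)^{1/4}\le\sqrt{\sum_i\frac{(\nabla_i f(\x^k))^2}{\sqrt{\v_i^k+\varepsilon}}}\cdot\sqrt{\sum_i\sqrt{\v_i^k+\varepsilon}},
\end{equation*}
and bounding the second factor via $\sum_i\sqrt{\v_i^k+\varepsilon}\le\sqrt{d\sum_i(\v_i^k+\varepsilon)}$, which in expectation turns into $\sqrt{d}\,(\|\nabla f(\x^k)\|_2^2+\sigma_s^2+d\varepsilon)^{1/4}\sqrt{d}$-type quantities. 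Squaring and rearranging gives $\E\|\nabla f(\x^k)\|_1$ on the left in terms of $\E\sum_i(\nabla_i f(\x^k))^2/\sqrt{\v_i^k+\varepsilon}$, exactly the quantity that the adaptive descent term produces. The choice $\varepsilon=\hat\sigma_s^2/d$ is then what makes the two pieces match in scale.

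Finally, I would telescope $\sum_k[f(\x^k)-f(\x^{k+1})]\le f(\x^1)-f^*$, substitute the given $\eta,\theta,\beta,\lambda,\varepsilon$, apply Jensen on $\sqrt{\cdot}$ to pull expectations inside, and simplify. The ``Specially'' case is immediate: under $\sigma_s^2\le L(f(\x^1)-f^*)/(K\gamma^2)$ we have $\hat\sigma_s^2=L(f(\x^1)-f^*)/(K\gamma^2)$ and the two terms in the rate collapse. I expect the main obstacle to be the clean bookkeeping in the Cauchy--Schwarz step together with the momentum-error recursion: ensuring that the displacement terms $\|\x^k-\x^{k-1}\|^2$ generated by bounding $\|\m^k-\nabla f(\x^k)\|^2$ and by the quadratic remainder are absorbed back into a fraction of the adaptive descent without blowing up the constants, and that the weight-decay cross term consumes strictly less than $\|\nabla f(\x^k)\|_1$, is what forces the specific $K^{3/4}$ scaling of the $\lambda$ bound and the $\sqrt{72}$ constants in the statement.
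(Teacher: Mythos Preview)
Your overall architecture is right—smoothness descent, momentum-error recursion, and Cauchy--Schwarz to turn the weighted quadratic into an $\ell_1$ norm—but there is a genuine gap in the Cauchy--Schwarz step that would block the argument as written. You propose to bound the second factor by $\sum_i\sqrt{\v_i^k+\varepsilon}\le\sqrt{d\sum_i(\v_i^k+\varepsilon)}$ and then claim this becomes $\sqrt d\,(\|\nabla f(\x^k)\|_2^2+\sigma_s^2+d\varepsilon)^{1/2}$ in expectation. But $\v_i^k$ is an exponential average of \emph{all past} $(\g_i^r)^2$, so its expectation involves $|\nabla_i f(\x^r)|^2$ for every $r\le k$, not just the current gradient; the ``drift absorbed by smoothness'' is not enough to kill these. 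More seriously, your bound reintroduces $\|\nabla f(\x^k)\|_2^2$ on the right side with no a priori control—recall the theorem makes no bounded-gradient assumption. The paper closes this via a self-bounding estimate (its Lemma~\ref{lemma4}): after passing to the surrogate $\widetilde\v_i^k=\beta\v_i^{k-1}+(1-\beta)(|\nabla_i f(\x^k)|^2+\sigma_i^2)$ by concavity of $\sqrt{\cdot}$, one proves
\[
\sum_{k,i}\E_{\F_{k-1}}\Big[\sqrt{\widetilde\v_i^k+\varepsilon}\Big]\le K\|\bsigma\|_1+Kd\sqrt\varepsilon+2\sum_{k,i}\E_{\F_{k-1}}\Big[\frac{|\nabla_i f(\x^k)|^2}{\sqrt{\widetilde\v_i^k+\varepsilon}}\Big],
\]
so the second Cauchy--Schwarz factor is controlled by the \emph{same} descent quantity plus constants, and the whole thing closes as a quadratic in $\sum_{k,i}|\nabla_i f|^2/\sqrt{\widetilde\v+\varepsilon}$. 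Without this lemma your bound does not close.

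There is a related gap in the weight-decay step. You want to absorb $|\lambda\eta\<\nabla f(\x^k),\x^k\>|\le\eta\lambda\|\x^k\|_\infty\|\nabla f(\x^k)\|_1$ into ``the main $\|\nabla f(\x^k)\|_1$ term'', but at the stage of the descent inequality there is no such term—the descent produces $\sum_i|\nabla_i f(\x^k)|^2/\sqrt{\v_i^k+\varepsilon}$, not $\|\nabla f(\x^k)\|_1$. The paper instead keeps $\m_i^k+\lambda\x_i^k\sqrt{\v_i^k+\varepsilon}$ together in the polarization identity $-\<a,b\>=-\tfrac12|a|^2-\tfrac12|b|^2+\tfrac12|a-b|^2$, which throws out a term $\eta|\lambda\x_i^k|^2\sqrt{\v_i^k+\varepsilon}$; after the $\ell_\infty$ bound gives $|\lambda\x_i^k|^2\le 9\nu/\sqrt K$, this term has exactly the shape that Lemma~\ref{lemma4} handles. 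Your route can be rescued by inserting a Young inequality $\tfrac{c}{K^{1/4}}|\nabla_i f|\le\tfrac14|\nabla_i f|^2/\sqrt{\v_i^k+\varepsilon}+\tfrac{c^2}{\sqrt K}\sqrt{\v_i^k+\varepsilon}$, but then you land in the same place and still need Lemma~\ref{lemma4}. Your $\ell_\infty$ iterate bound, by contrast, is fine and in fact simpler than the paper's contraction argument: the crude $\|\x^{k+1}\|_\infty\le\|\x^1\|_\infty+\sqrt{2}\,\eta K$ (using $|\m_i^k|^2\le 2\v_i^k$ from Lemma~\ref{lemma1}, not $\le\v_i^k$ as you wrote) already gives $\lambda\|\x^k\|_\infty=O(K^{-1/4})$ under the stated parameters.
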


Theorem \ref{theorem1} demonstrates that AdamW minimizes the gradient norm directly while restricting $\|\x^k\|_{\infty}<\frac{1}{\lambda}$. As a comparison, $\ell_2$ regularized Adam only minimizes $\|\nabla f(\x)+\lambda\x\|$, rather than $\|\nabla f(\x)\|$. We provide the proof of an enhanced variant of AdamW in Section \ref{sec:proof} and summary the key proof ideas in Remarks \ref{remark1} and \ref{remark2}. As a special case, we also establish the same convergence rate for Adam in the following corollary under slightly relaxed parameter settings. The complete description of an enhanced variant is given in Section \ref{sec:corollary}.

\begin{corollary}\label{corollay1}
With the same assumptions and parameter settings of $1-\theta$, $\eta$, and $\varepsilon$ as Theorem \ref{theorem1}, but only requiring $0\leq\beta\leq 1$ rather than both $\theta\leq \beta\leq\sqrt{\theta}$ and $\|\x^1\|_{\infty}\leq\frac{5}{8}\sqrt{\frac{K(f(\x^1)-f^*)}{dL}}$, we have the same convergence rate for Adam as established in Theorem \ref{theorem1}. 
\end{corollary}

\subsection{Optimality of Our Convergence Rate}\label{sec2-1} 

When comparing our convergence rate (\ref{AdamW-rate}) with the optimal rate (\ref{SGD-rate}) of SGD, which aligns with the lower bound in nonconvex stochastic optimization, we observe that our rate is also optimal with respect to $K$, $\sigma_s$, $L$, and $f(\x^1)-f^*$. The only remaining uncertainty concerns the tightness of the dimension $d$. Theoretically, $\|\nabla f(\x)\|_2\ll \|\nabla f(\x)\|_1\leq \sqrt{d}\|\nabla f(\x)\|_2$ holds for any high-dimensional vector $\x$, and when each element of $\nabla f(\x)$ is drawn from Gaussian distribution $\mathcal N(0,1)$, we have $\E\left[\|\nabla f(\x)\|_1\right]\geq\sqrt{\frac{2d}{\pi}}\E\left[\|\nabla f(\x)\|_2\right]$ from Lemma \ref{main-lemma}. Empirically, experiments on real deep neural networks training confirm $\|\nabla f(\x)\|_1=\varTheta(\sqrt{d})\|\nabla f(\x)\|_2$, as demonstrated in Figure \ref{figure2}. Thus, our convergence rate (\ref{AdamW-rate}) can be regarded to be analogous to SGD's optimal rate (\ref{SGD-rate}) in the ideal case.

\begin{figure}[t]
   \includegraphics[width=\linewidth]{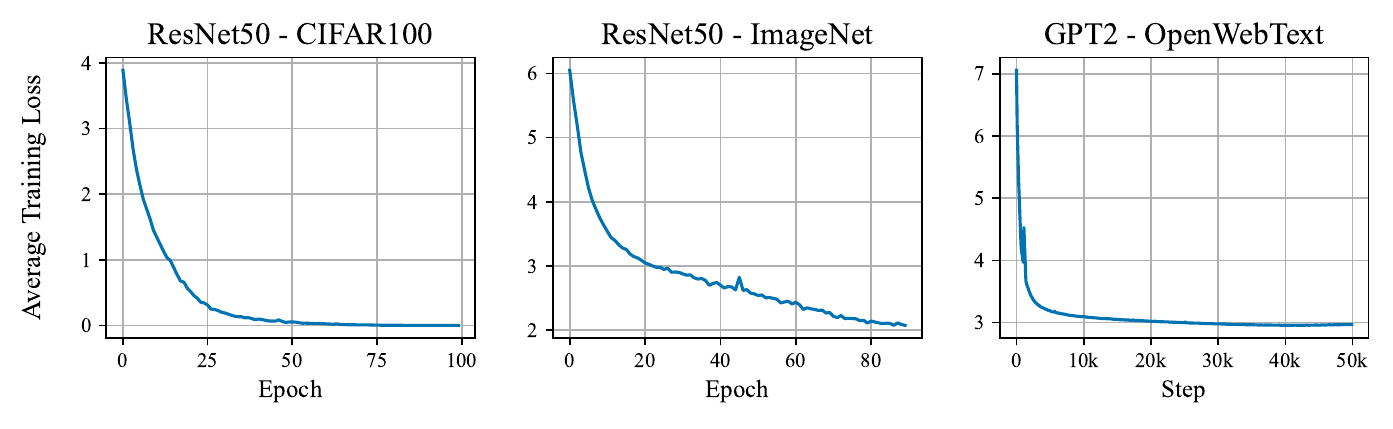}
   \caption{Illustration of average training loss $f(\x^k)$ for AdamW over epochs/steps, and at the initialization, $f(\x^1)\leq 8$.}
   \label{figure1}
\end{figure}

\begin{figure}[t]
  \includegraphics[width=\linewidth]{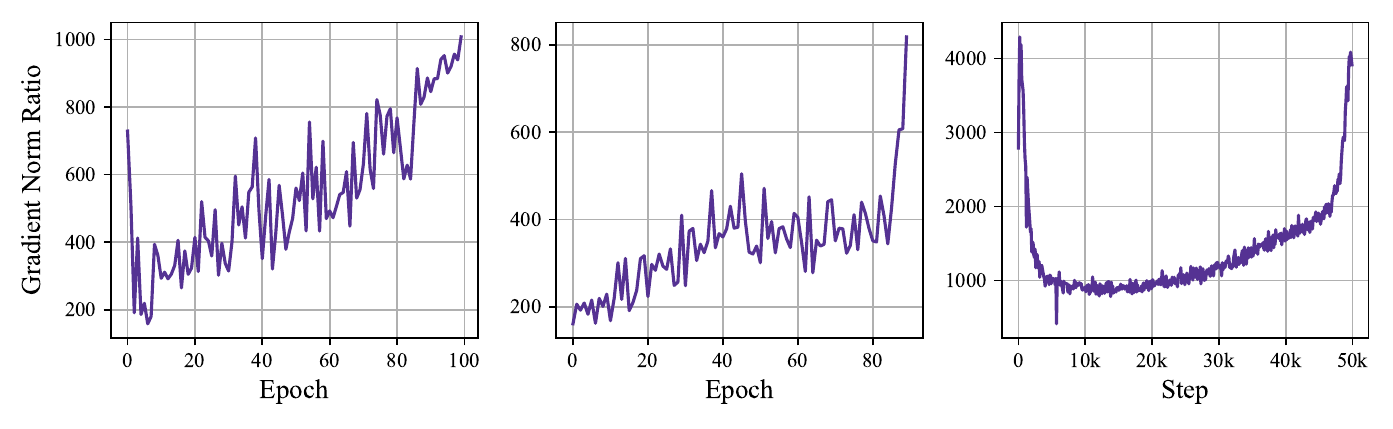}
  \caption{Illustration of $\|\nabla f(\x^k)\|_1=\varTheta(\sqrt{d})\|\nabla f(\x^k)\|_2$ for AdamW over epochs/steps. The gradient norm ratio shows $\frac{\|\nabla f(\x^k)\|_1}{\|\nabla f(\x^k)\|_2}$, and $\sqrt{d}= 4868$, $ 5060$, and $11136$, respectively.}
  \label{figure2}
\end{figure}

\begin{lemma}\label{main-lemma}
When each entry of $\x\in\mathbb R^d$ is generated from Gaussian distribution with zero mean and unit variance, we have $\E\left[\|\x\|_1\right]\geq\sqrt{\frac{2d}{\pi}}\E\left[\|\x\|_2\right]$.
\end{lemma}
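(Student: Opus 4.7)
The plan is to compute $\E[\|\x\|_1]$ exactly using the known mean of the half-normal distribution, and then to upper-bound $\E[\|\x\|_2]$ by a simple application of Jensen's inequality. The combination of these two pieces immediately delivers the stated inequality.

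First I would note that for $X_i\sim\mathcal{N}(0,1)$, the absolute value $|X_i|$ follows a half-normal distribution with $\E[|X_i|]=\sqrt{2/\pi}$. By linearity of expectation, this gives
\begin{equation*}
\E\left[\|\x\|_1\right]=\sum_{i=1}^d\E[|X_i|]=d\sqrt{\tfrac{2}{\pi}}.
\end{equation*}
Next I would bound $\E[\|\x\|_2]$ from above. Since $\sqrt{\cdot}$ is concave, Jensen's inequality yields
\begin{equation*}
\E\left[\|\x\|_2\right]=\E\Bigl[\sqrt{\textstyle\sum_{i=1}^dX_i^2}\Bigr]\leq\sqrt{\textstyle\sum_{i=1}^d\E[X_i^2]}=\sqrt{d}.
\end{equation*}

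Combining the two yields $\E[\|\x\|_1]=d\sqrt{2/\pi}=\sqrt{2d/\pi}\cdot\sqrt{d}\geq \sqrt{2d/\pi}\cdot\E[\|\x\|_2]$, which is exactly the claim. There is essentially no obstacle here: the only subtlety worth emphasizing is that the direction of Jensen's inequality is the favorable one (we need an upper bound on $\E[\|\x\|_2]$, and concavity of the square root delivers it), so no concentration-style argument is needed. The resulting estimate is tight in order, matching the intuition that for i.i.d.\ standard Gaussian coordinates $\|\x\|_2$ concentrates near $\sqrt{d}$ while $\|\x\|_1$ concentrates near $d\sqrt{2/\pi}$, so the ratio $\|\x\|_1/\|\x\|_2$ is of order $\sqrt{d}$ as invoked in Section \ref{sec2-1}.
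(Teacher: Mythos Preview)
Your proposal is correct and follows essentially the same argument as the paper: compute $\E[\|\x\|_1]=d\sqrt{2/\pi}$ via the half-normal mean, bound $\E[\|\x\|_2]\le\sqrt{d}$ by Jensen (concavity of $\sqrt{\cdot}$), and combine. There is no substantive difference between your route and the paper's.
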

Recently, \citet{jiang-adagrad-2024} established a fundamental lower bound for SGD when measuring gradients by $\ell_1$ norm, which is of order $\Omega\left(\sqrt{\frac{dL(f(\x^1)-f^*)}{K}}+\sqrt[4]{\frac{dL(f(\x^1)-f^*)\|\bsigma\|_1^2}{K}}\right)$ under Assumptions 1-3. When $\|\bsigma\|_1\approx \sqrt{d}\|\bsigma\|_2=\sqrt{d}\sigma_s$, this lower bound precisely aligns with our convergence rate in (\ref{AdamW-rate}). We further conjecture that this lower bound applies more broadly to general first-order stochastic optimization algorithms under $\ell_1$ norm gradient measurement. This would imply that our derived convergence rate is nearly tight.

\begin{figure}[t]
  \includegraphics[width=\linewidth]{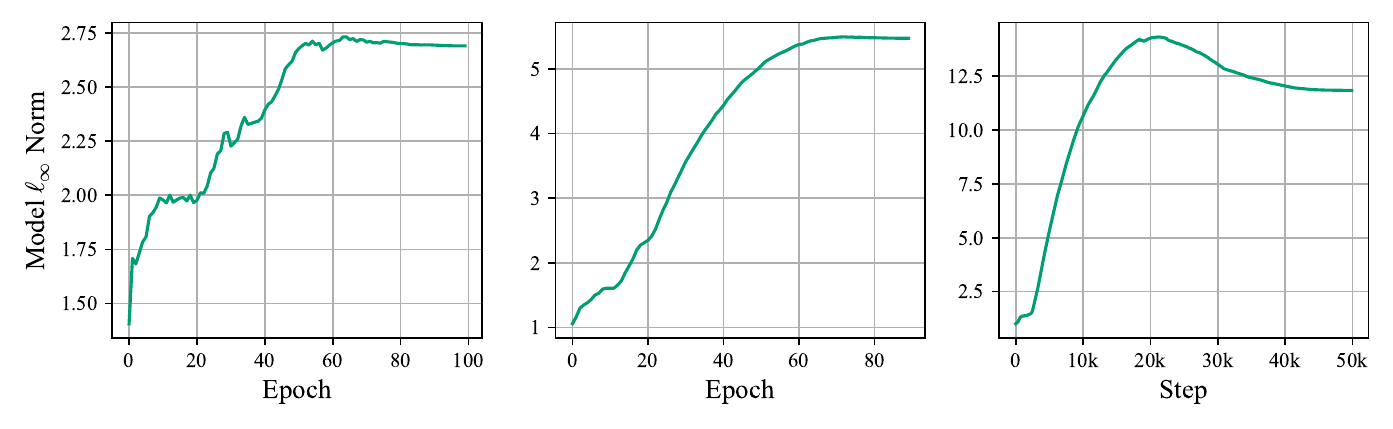}
  \caption{Illustration of $\|\x^k\|_{\infty}<\frac{1}{\lambda}$ for AdamW over epochs/steps. The model $\ell_{\infty}$ norm shows $\|\x^k\|_{\infty}$, and $\lambda=0.01$, $0.1$, and $0.05$, respectively.}
\label{figure3}
\end{figure}

\begin{figure}[t]
  \includegraphics[width=\linewidth]{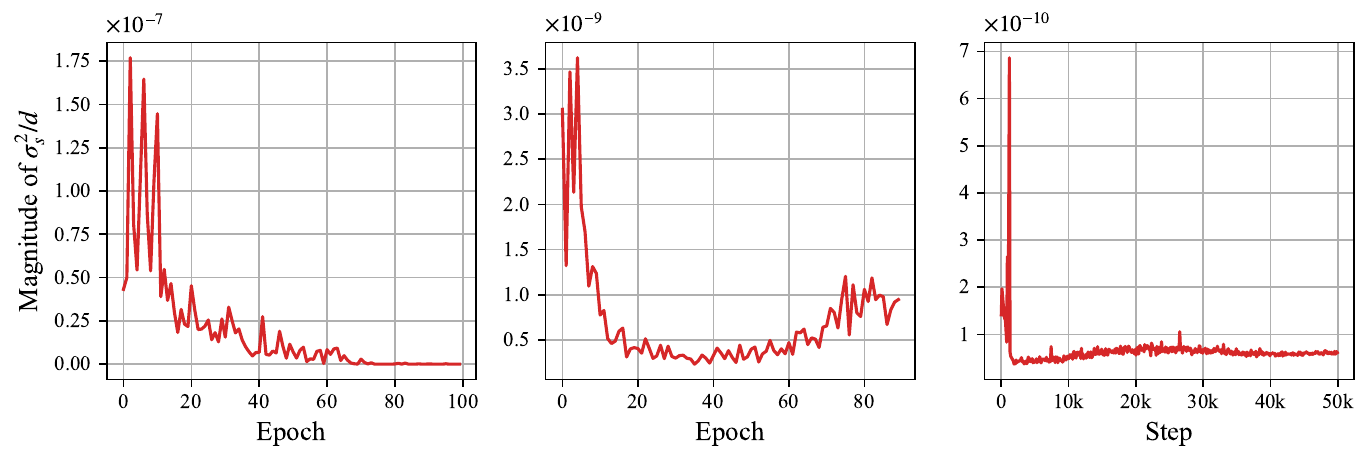}
  \caption{Illustration of small $\frac{\sigma_s^2}{d}$ over epochs/steps. The magnitude $\sigma_s^2$ is approximated by $\|\g^k-\nabla f(\x^k)\|^2$ for AdamW without taking expectation, and $d=2.37\times 10^7$, $2.56\times 10^7$, and $1.24\times 10^8$, respectively.}
  \label{figure4}
\end{figure}

\subsection{Separating the Convergence Rate by the Noise Variance} 

In Theorem \ref{theorem1}, we separate the convergence rate by the magnitude of $\sigma_s$. When $\sigma_s^2\geq\frac{L(f(\x^1)-f^*)}{K\gamma^2}$, both the convergence rates of AdamW and Adam are $\bO(\frac{\sqrt{d}}{K^{1/4}})$. When $\sigma_s^2$ becomes smaller than $\frac{L(f(\x^1)-f^*)}{K\gamma^2}$, the convergence rates improve to $\bO(\sqrt{\frac{d}{K}})$, matching that of gradient descent measured by $\ell_1$ norm. 

\subsection{Reasonable Weight Decay Parameter and Initialization Interval}

In Theorem \ref{theorem1}, we set the weight decay parameter $\lambda$ smaller than $\frac{\sqrt{2d}}{5K^{3/4}}\sqrt[4]{\frac{L^3}{\hat\sigma_s^2(f(\x^1)-f^*)}}$. In modern deep neural networks, the dimension $d$ is typically extremely large, for example, $d=1.75\times 10^{11}$ in GPT-3, making $\frac{\sqrt{d}}{K^{3/4}}$ almost certainly exceed $0.01$, which is the default setting of $\lambda$ in PyTorch official implementation. For example, in the experiments of our paper, we train ResNet-50 on i) CIFAR-100 and ii) ImageNet dataset, and GPT-2 on iii) OpenWebText, and observe $(K,d)=(39100, 2.37\times 10^7)$, $(28080, 2.56\times 10^7)$, and $( 50000, 1.24\times 10^8)$, resulting in $\frac{\sqrt{d}}{K^{3/4}}\approx 1.75$, $2.33$, and $3.33$, respectively. Through the following example, we demonstrate that the upper bound of $\lambda$ is necessary for convergence. Specifically, we consider the following function
\begin{eqnarray}
\begin{aligned}\notag
f(x)=\frac{(x-x^*)^2}{200},\mbox{ with the stochastic gradient }g(x)=\left\{
\begin{array}{ll}
x-x^*-1, & \mbox{with probability } p=0.1, \\
-\frac{1}{10}(x-x^*-\frac{10}{9}), & \mbox{with probability }1-p. \\
\end{array}
\right.
\end{aligned}
\end{eqnarray}
We set $K=10^{10}$, $\theta=1-\frac{1}{\sqrt{K}}$, $\beta=\sqrt{\theta}$, $\eta=\frac{1}{\sqrt{K}}$, $\varepsilon=10^{-10}$, $m^0=0$, $v^0=0$, and $x^1=x^*+1$ for AdamW, where $x^*=5$ is the minimum solution of $f(x)$. We test $\lambda=\{10^{-1},10^{-2},10^{-3},10^{-4},10^{-5},0\}$ such that $x^*<\frac{1}{\lambda}$ and thus the KKT conditions (\ref{kkt-condtion}) reduce to $|\nabla f(x^*)|=0$ at the minimum solution. So we can use the gradient norm $|\nabla f(x)|$ to measure the convergence. From Figure \ref{figure5}, we see that AdamW fails to converge to $x^*$ when $\lambda=\{10^{-1},10^{-2},10^{-3}\}$, indicating that large values of $\lambda$ exceeding a certain threshold may cause AdamW neither to converge to the minimum solution nor to a KKT point satisfying (\ref{kkt-condtion})\footnote{This does not conflict with \cite{xie-2024-adamw-icml} because \citet{xie-2024-adamw-icml} only considered deterministic AdamW.}. In practical implementations, excessively large values of $\lambda$ are typically avoided, as they may drive the parameters toward zero and away from the minimum solution.

\begin{figure}
\hspace*{2.2cm}\begin{tabular}{@{\extracolsep{1em}}c@{\extracolsep{1em}}c}
   \includegraphics[width=0.35\linewidth,keepaspectratio]{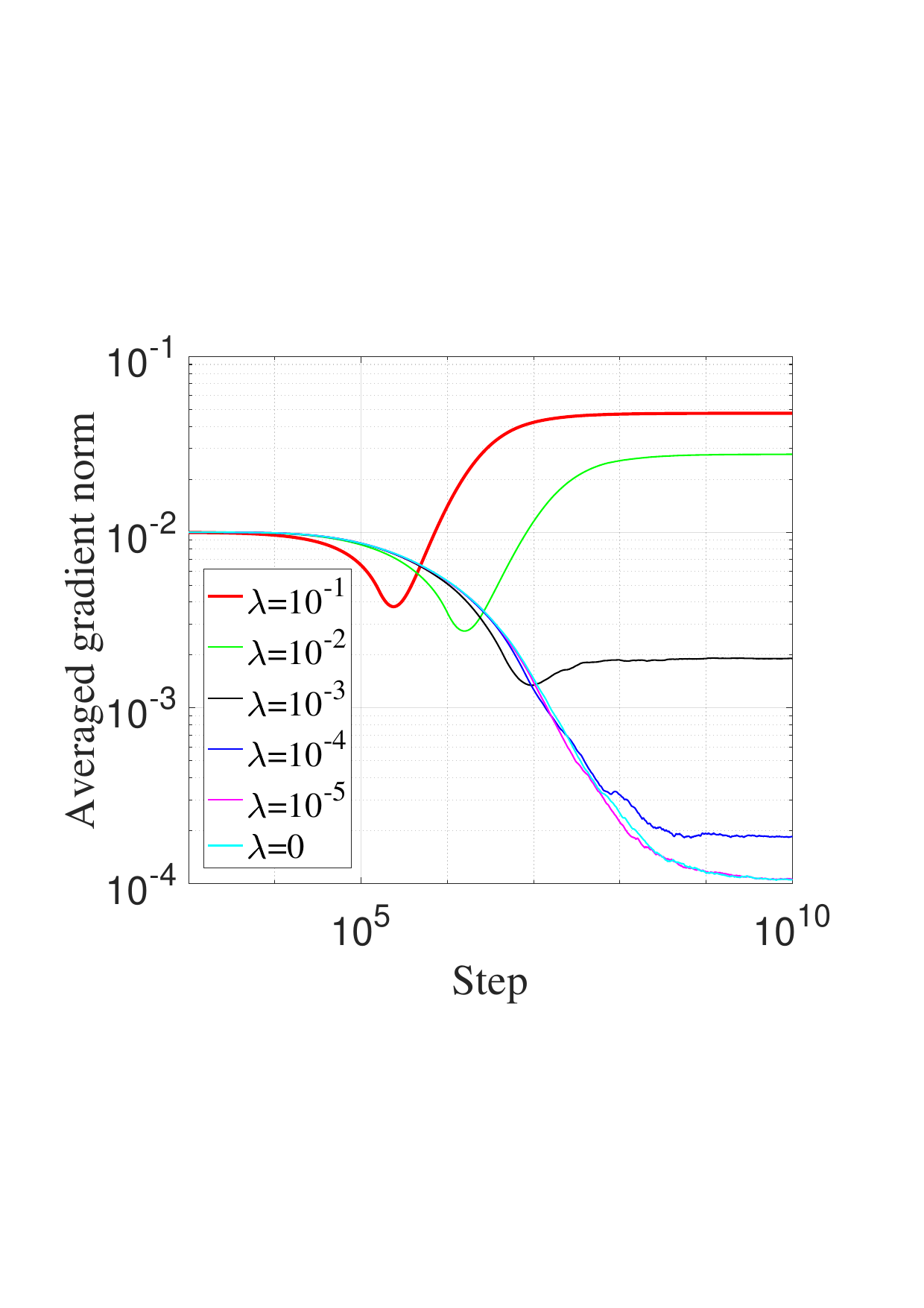}
   &\includegraphics[width=0.34\linewidth,keepaspectratio]{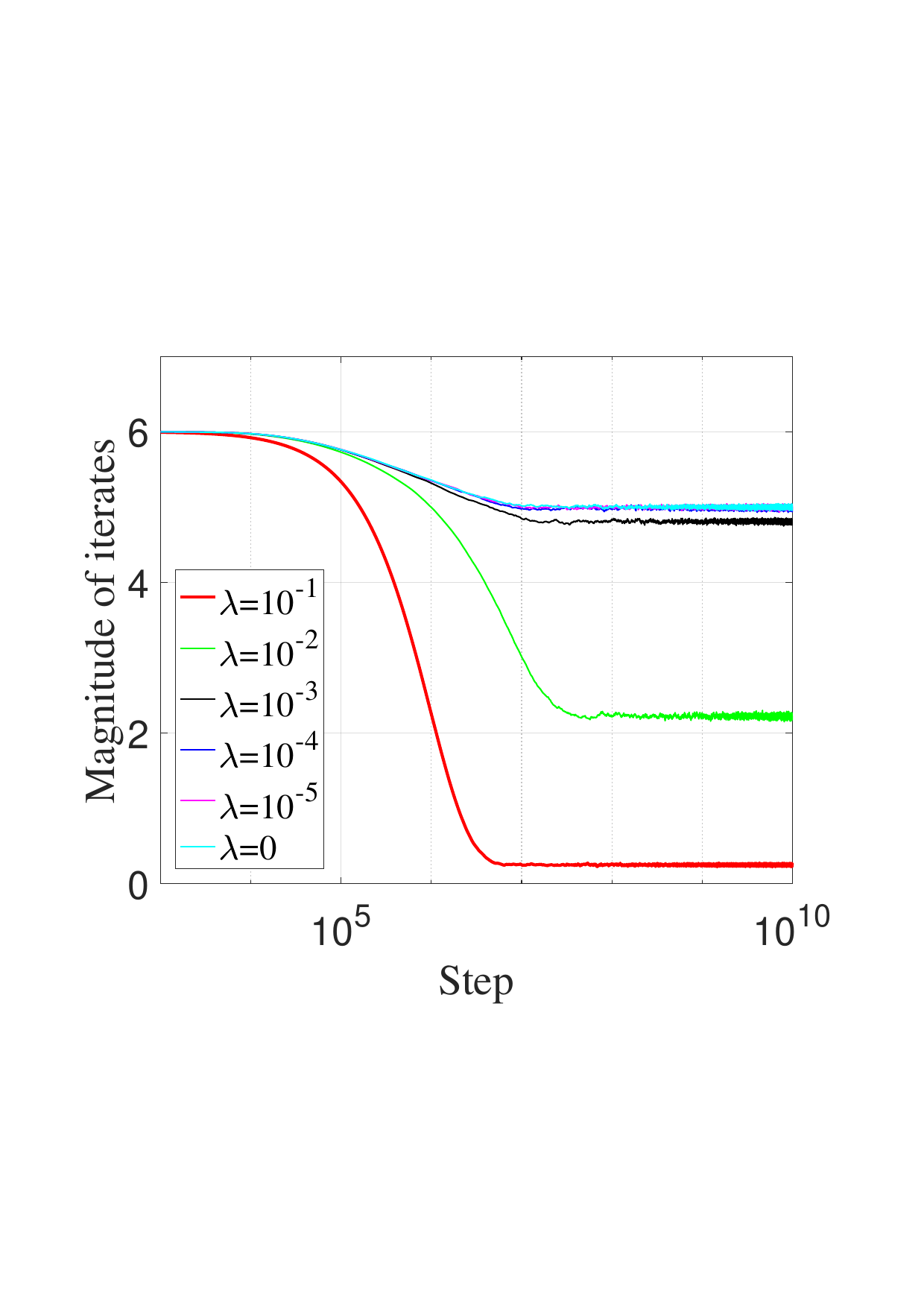}
   \end{tabular}
   \caption{Illustrations of $\frac{1}{k}\sum_{t=1}^k|\nabla f(x^t)|$ (left) and $x^k$ (right) over steps on the toy example.}
   \label{figure5}
\end{figure}

We also initialize $\|\x^1\|_{\infty}\leq \frac{5}{8}\sqrt{\frac{K(f(\x^1)-f^*)}{dL}}$ in our theorem. Although $\sqrt{\frac{K}{d}}$ is typically smaller than 1 in large language models training, it remains not too small. In practical configurations, we often initialize the network weights very small. On the other hand, although we always initialize the scale parameter in BatchNorm/LayerNorm to 1, we do not use weight decay for the scale parameter in practice.

\subsection{Small $\varepsilon$ Setting}

In practice, $\varepsilon$ is typically set to a very small value, for example, approximately $10^{-16}$ in PyTorch implementation\footnote{In PyTorch official implementation, $\varepsilon$ appeared in a different place in $\frac{\eta}{\sqrt{\v^k}+\varepsilon}$ and $\varepsilon=10^{-8}$, while we use $\frac{\eta}{\sqrt{\v^k+\varepsilon}}$.}, to prevent division by zero while maintaining the adaptive properties of AdamW and Adam. Larger $\varepsilon$ values would make AdamW and Adam behave similarly to SGD, losing its adaptive learning rate adjustment. In Theorem \ref{theorem1}, we set $\varepsilon=\frac{\hat\sigma_s^2}{d}=\max\left\{\frac{\sigma_s^2}{d},\frac{L(f(\x^1)-f^*)}{dK\gamma^2}\right\}$, which remains small due to extremely large $d$ and modest $\sigma_s^2$. We have empirically shown in Figure \ref{figure4} that $\frac{\sigma_s^2}{d}\approx 10^{-7}$, $10^{-9}$, and $10^{-10}$ in our experiments of ResNet-50 on Cifar-100 and ImageNet and GPT-2 on OpenWebText, respectively. Intuitively, $\varepsilon$ should be smaller than the square of stochastic gradient at each coordinate, otherwise, $\varepsilon$ would dominate the magnitude of $\v_i^k$ in $\frac{1}{\sqrt{\v_i^k+\varepsilon}}$. Our setting of $\varepsilon$, the coordinate-wise average of gradient noise variance, approximately resides at this critical threshold. Notably, our convergence rates for both AdamW and Adam do not depend on $\varepsilon$ explicitly. In comparison, existing convergence rates for AdamW and Adam in the literature either explicitly depend on $\varepsilon$ or exhibit a higher dependence on the dimension $d$.

\subsection{Unpractical Settings of $\eta$, $\theta$, and $\beta$}\label{sec2-5}  

In Theorem \ref{theorem1}, we set the learning rate $\eta$ very small and the parameters $\theta$ and $\beta$ nearly equal to 1 to satisfy the proof requirements. This differs from standard implementations where $(\theta,\beta)=(0.9,0.999)$ is typically used. Although investigating AdamW/Adam's property under realistic configurations represents an important research direction, as it could yield valuable insights for deep learning hyperparameters tuning, the practical configurations may not guarantee the convergence. For instance, prior work \citep{luo-2022-nips,Wang-2024-kdd} demonstrates through constructed examples that Adam with common hyperparameters ($\theta=0.9$, $\beta=\{0.999,0.997,0.995,0.993\}$, and $\eta_k=\frac{0.1}{\sqrt{k}}$) fail to converge to stationary points (see \citep[Figuire 2]{Wang-2024-kdd}). On the other hand, empirical evidence from recent studies \citep[Figure 9]{Cutkosky-2024} \citep[Figure 6]{kaiyue-compare-2025} demonstrate that during the training of language models with practical parameter configurations, the gradient norm hardly decreases during the training run, although the objective function decreases sufficiently.

\subsection{No Conflict with \citep{xie-2024-adamw-icml}}

For sufficiently large weight decay parameter $\lambda$ where no critical points exist within the constrained domain of problem (\ref{problem}), the KKT conditions (\ref{kkt-condtion}) serve as a natural convergence metric. As $\lambda$ diminishes, problem (\ref{problem}) asymptotically approaches an unconstrained optimization problem, and AdamW reduces to Adam in the limit. There exists a critical threshold beyond which $\|\nabla f(\x)\|_1$ also becomes a viable metric for convergence. Consequently, our results do not conflict with \citep{xie-2024-adamw-icml}.

\section{Extension to NAdamW incorporating Double-Momentum}

In this section, we extend our theory to NAdamW \citep{Dozat-nadam-iclr-2016}. NAdamW is originally inspired by Nesterov's accelerated gradient method, and after a series of equivalent transformations and approximations, it eventually results in the formulation presented in Algorithm \ref{nadamw}. In contrast to AdamW, NAdamW incorporates a double-momentum mechanism, a strategy that is also employed in LION \citep{lion-2023-nips} and Muon \citep{muon2024}. When $\tau=1$, NAdamW reduces to AdamW.

\begin{theorem}\label{theorem2}
Suppose that Assumptions 1-3 hold. Define $\hat\sigma_s^2=\max\left\{\sigma_s^2,\frac{L(f(\x^1)-f^*)}{K\gamma^2}\right\}$ with any constant $\gamma\in(0,1]$. Let $1-\theta=\sqrt{\frac{L(f(\x^1)-f^*)}{K\hat\sigma_s^2}}$, $\theta\leq\tau\leq 1$, $\theta\leq \beta\leq\sqrt{\theta}$, $\eta=\sqrt{\frac{f(\x^1)-f^*}{4KdL}}$, $\varepsilon=\frac{\hat\sigma_s^2}{d}$, $\lambda\leq\frac{\sqrt{2d}}{5K^{3/4}}\sqrt[4]{\frac{L^3}{\hat\sigma_s^2(f(\x^1)-f^*)}}$, and $\|\x^1\|_{\infty}\leq \frac{5}{8}\sqrt{\frac{K(f(\x^1)-f^*)}{dL}}$. Then for NAdamW, we have $\|\x^k\|_{\infty}<\frac{1}{\lambda}$ for all $ k=1,2,\cdots,K$ and
\begin{eqnarray}
\begin{aligned}\notag
\frac{1}{K}\sum_{k=1}^K\E\left[\|\nabla f(\x^k)\|_1\right]\leq \frac{9\sqrt{d}}{K^{1/4}}\sqrt[4]{\hat\sigma_s^2L(f(\x^1)-f^*)} + 51\sqrt{\frac{dL(f(\x^1)-f^*)}{K}}.
\end{aligned}
\end{eqnarray}
Specially, when $\sigma_s^2\leq\frac{L(f(\x^1)-f^*)}{K\gamma^2}$, we have $1-\theta=\gamma$, $\theta\leq\tau\leq 1$, $\theta\leq \beta\leq\sqrt{\theta}$, $\eta=\sqrt{\frac{f(\x^1)-f^*}{4KdL}}$, $\varepsilon=\frac{L(f(\x^1)-f^*)}{dK\gamma^2}$, $\lambda\leq\frac{\sqrt{2}}{5}\sqrt{\frac{dL\gamma}{K(f(\x^1)-f^*)}}$, $\|\x^1\|_{\infty}\leq \frac{5}{8}\sqrt{\frac{K(f(\x^1)-f^*)}{dL}}$, $\|\x^k\|_{\infty}<\frac{1}{\lambda}$, and accordingly
\begin{eqnarray}
\begin{aligned}\notag
\frac{1}{K}\sum_{k=1}^K\E\left[\|\nabla f(\x^k)\|_1\right]\leq 60\sqrt{\frac{dL(f(\x^1)-f^*)}{K\gamma}}.
\end{aligned}
\end{eqnarray}
\end{theorem}

In the following corollary, we also establish the same convergence rate for NAdam, that is, $\lambda=0$ in Algorithm \ref{nadamw}.

\begin{corollary}\label{corollary2}
With the same assumptions and parameter settings of $1-\theta$, $\tau$, $\eta$, and $\varepsilon$ as Theorem \ref{theorem2}, but only requiring $0\leq\beta\leq 1$ rather than both $\theta\leq \beta\leq\sqrt{\theta}$ and $\|\x^1\|_{\infty}\leq\frac{5}{8}\sqrt{\frac{K(f(\x^1)-f^*)}{dL}}$, we have the same convergence rate for NAdam as established in Theorem \ref{theorem2}. 
\end{corollary}

\section{Literature Comparisons}\label{sec4}

In this section, we compare our theoretical results with representative ones in the literature. A substantial amount of literature exists regarding the convergence analysis of adaptive gradient algorithms, such as \citep{Bottou-2020-jmlr,Kavis-2022-iclr,Faw-2022-colt,WeiChen-2023-colt,Attia-2023-icml} for AdaGrad-norm, \citep{WeiChen-2023-colt,jiang-adagrad-2024,zhangtong-adagrad-2024,Liu-2023-icml,hong-2024-adagrad} for AdaGrad, \citep{Zou-2019-cvpr,bottou-2022-tmlr,luo-2020-iclr,lihuan-rmsprop-2024,xie-rmsprop-iclr-2024} for RMSProp, \citep{wang-adamSGD-2024} for Adam-norm, \citep{Reddi-2018-iclr,Zou-2019-cvpr,bottou-2022-tmlr,tianbaoyang-2021,Chen-2022-jmlr,luo-2022-nips,Wang-2023-nips,Wang-2024-kdd,JunhongLin-2023,hong-2024-adam,Qi-Zhang-2024-rmsporp,haochuanli-2023,Cutkosky-2024-nips,RuinanJin-2025-icml} for Adam, and \citep{Zaheer-2018-nips,chen-2019-iclr,luo-2019-iclr,You-2019-iclr,Zhuang-2020-nips,Chen-2021-ijcai,Savarese-2021-cvpr,Crawshaw-2022,xie-2022,ADOPT-2024-nips} for other variants. We primarily compare with the literature on AdamW and Adam. For Adam, we restrict our comparison to studies with the state-of-the-art convergence rates that do not require the bounded gradient assumption.

\subsection{AdamW: Comparison with \citep{zhou-pami-2024}}

tTo the best of our knowledge based on a comprehensive literature review, \citep{zhou-pami-2024} appears to be the only existing paper addressing AdamW's convergence and convergence rate. We compare with \citep{zhou-pami-2024} in the following aspects. Firstly, the assumptions in \citep{zhou-pami-2024} are stronger than ours. Denoting $f(\x)=\E_{\zeta\in \mathcal{P}}[f(\x;\zeta)]$, they assumed $\|\nabla f(\x;\zeta)-\nabla f(\x;\zeta)\|\leq L\|\x-\x\|$ (under which the lower bound is $\bO(\frac{1}{\epsilon^3})$, rather than $\bO(\frac{1}{\epsilon^4})$ \citep{Arjevani-2023-mp}) and $\|\g^k\|_{\infty}\leq c_{\infty}$, while we only assume $\|\nabla f(\x)-\nabla f(\x)\|\leq L\|\x-\x\|$ without the bounded gradient assumption. Secondly, they set the weight decay parameter $\lambda_k=\lambda(1-\frac{\beta c_{\infty}^2}{\varepsilon})^k$, which decreases exponentially, making AdamW reduce to standard Adam in the limit. Thirdly, they establish the complexity of $\bO(\max\{\frac{c_{\infty}^{2.5}L\sigma_s^2(f(\x^1)-f^*)}{\varepsilon^{1.25}\epsilon^4},\frac{c_{\infty}^2\sigma_s^4}{\varepsilon\epsilon^4}\})$ to achieve $\frac{1}{K}\sum_{k=1}^K\E[\|\nabla F_k(\x^k)\|^2]\leq\epsilon^2$, where $F_k$ is a dynamic $\ell_2$ regularized objective. Their complexity depends on $\varepsilon$ explicitly, which is usually small in practice, for example, $\varepsilon\approx 10^{-16}$ in PyTorch implementation. As a comparison, our convergence rate does not depend on $\varepsilon$ explicitly.

\subsection{Adam: Comparison with \citep{lihuan-rmsprop-2024}}\label{sec4-1}

\citet{lihuan-rmsprop-2024} studied RMSProp and its momentum extension, where RMSProp is a special case of Adam by letting $\theta=0$ and $\lambda=0$ in Algorithm \ref{adamw}. The convergence analysis of Adam presents substantially greater challenges than RMSProp and we cannot extend the proofs in \citep{lihuan-rmsprop-2024} to Adam. Alternatively, this paper uses a different proof framework to  establish for Adam the same convergence rate achieved by \citep{lihuan-rmsprop-2024} under identical assumptions. As a trade-off, one limitation of our proof is that it relies on a larger value of parameter $\varepsilon$, although $\varepsilon=\frac{\hat\sigma_s^2}{d}$ is very small in practice. Specifically, under the parameter settings of $\beta=1-\frac{1}{K}$, $\v_i^0=\lambda\max\{\sigma_i^2,\frac{1}{dK}\}$, and $\lambda\geq \frac{\sigma_s^2}{KL(f(\x^1)-f^*)}$ in \citep{lihuan-rmsprop-2024}, we have $\frac{1}{e^2}\leq\beta^t\leq 1$ for any $t\leq K$ and
\begin{eqnarray}
\begin{aligned}\notag
\v_i^k=\beta^k\v_i^0+(1-\beta)\sum_{t=1}^k\beta^{k-t}|\g_i^t|^2\approx \frac{\sigma_i^2}{K}\frac{\sigma_s^2}{L(f(\x^1)-f^*)} + \frac{1}{K}\sum_{t=1}^k|\g_i^t|^2,
\end{aligned}
\end{eqnarray}
where $\beta^k\v_i^0$ plays the role of $\varepsilon$ in Algorithm \ref{adamw}, which is of the order $\frac{\sigma_i^2}{K}$, or approximately $\frac{\sigma_s^2}{dK}$. As a comparison, in this paper, we have
\begin{eqnarray}
\begin{aligned}\notag
\v_i^k+\varepsilon=\varepsilon+(1-\beta)\sum_{t=1}^k\beta^{k-t}|\g_i^t|^2=\frac{\hat\sigma_s^2}{d}+(1-\beta)\sum_{t=1}^k\beta^{k-t}|\g_i^t|^2\approx \sigma_i^2+(1-\beta)\sum_{t=1}^k\beta^{k-t}|\g_i^t|^2.
\end{aligned}
\end{eqnarray}
When $\nabla f(\x^t)\approx 0$ such that $|\g_i^t|\approx \sigma_i$, we have $(1-\beta)\sum_{t=1}^k\beta^{k-t}|\g_i^t|^2\approx \sigma_i^2$. Thus, $\varepsilon$ accounts for nearly half of $(\v_i^k+\varepsilon)$'s size, while in \citep{lihuan-rmsprop-2024}, $\beta^k\v_i^0$ only makes up close to $\frac{1}{k}$ of $\v_i^k$'s total size. Other representative studies \citep{JunhongLin-2023,hong-2024-adam,Wang-2023-nips} have derived convergence guarantees for Adam built upon weak $\varepsilon$-dependent analysis. However, these results all yield slower convergence rates than ours with a higher dependence on the dimension $d$.

\subsection{Adam: Comparison with \citep{haochuanli-2023}}

\citet{haochuanli-2023} studied Adam under assumption $\|\g^k-\nabla f(\x^k)\|\leq \sigma_s$ with probability 1 and proved $\frac{1}{K}\sum_{k=1}^K\|\nabla f(\x^k)\|_2^2\leq\epsilon^2$ with high probability within $\bO(\frac{G^{2.5} \sigma_s^2L(f(\x^1)-f^*)}{\widetilde\varepsilon^{2.5}\epsilon^4})$ iterations. That is, $\frac{1}{K}\sum_{k=1}^K\|\nabla f(\x^k)\|_2\leq (\frac{G}{\widetilde\varepsilon})^{5/8}\frac{1}{K^{1/4}}\sqrt[4]{\sigma_s^2L(f(\x^1)-f^*)}$, where $G\geq \max\{\widetilde\varepsilon,\sigma_s,\sqrt{L(f(\x^1)-f^*)}\}$ and $\widetilde\varepsilon$ appeared in a different place in $\frac{\m^k}{\sqrt{\v^k}+\widetilde\varepsilon}$ (hence we may consider $\widetilde\varepsilon$ to be equal to $\sqrt{\varepsilon}$). When $\|\nabla f(\x)\|_1=\varTheta(\sqrt{d})\|\nabla f(\x)\|_2$, as empirically observed in real-world deep learning training, our convergence rate is $(\frac{G}{\widetilde\varepsilon})^{5/8}$ times faster than \citep{haochuanli-2023}. In PyTorch implementation, the default value of $\widetilde\varepsilon$ is typically set to $10^{-8}$. To eliminate the dependence on $\varepsilon$, \cite{haochuanli-2023} requires $\widetilde\varepsilon^2(\approx \varepsilon)=G^2\geq\max\{\sigma_s^2,L(f(\x^1)-f^*)\}\geq \sigma_s^2$, while we only need $\varepsilon=\frac{\sigma_s^2}{d}$, which is $d$ times smaller. 

\section{Proof of Theorem \ref{theorem2}}\label{sec:proof}
In this section, we prove Theorem \ref{theorem2} with the supporting lemmas in Section \ref{sec-supporting-lemmas}. Theorem \ref{theorem1} is a special case of Theorem \ref{theorem2} by letting $\tau=1$ and we omit the details.
\begin{proof}
As the gradient is $L$-Lipschitz, we have
\begin{eqnarray}
\begin{aligned}\label{equ1}
&\E_k\left[f(\x^{k+1})|\F_{k-1}\right]-f(\x^k)\\
\leq& \E_k\left[\<\nabla f(\x^k),\x^{k+1}-\x^k\>+\frac{L}{2}\|\x^{k+1}-\x^k\|^2\Big|\F_{k-1}\right]\\
=& \E_k\left[-\eta\sum_{i=1}^d\<\nabla_i f(\x^k),\frac{\widetilde\m_i^k+\lambda\x_i^k\sqrt{\v_i^k+\varepsilon}}{\sqrt{\v_i^k+\varepsilon}}\>+\frac{L\eta^2}{2}\sum_{i=1}^d\frac{\left|\widetilde\m_i^k+\lambda\x_i^k\sqrt{\v_i^k+\varepsilon}\right|^2}{\v_i^k+\varepsilon} \Big|\F_{k-1}\right]\\
=& \E_k\left[-\frac{\eta}{2}\sum_{i=1}^d\frac{\left|\nabla_i f(\x^k)\right|^2}{\sqrt{\v_i^k+\varepsilon}}-\frac{\eta}{2}\sum_{i=1}^d\frac{\left|\widetilde\m_i^k+\lambda\x_i^k\sqrt{\v_i^k+\varepsilon}\right|^2}{\sqrt{\v_i^k+\varepsilon}}\right.\\
&\qquad\left.+\frac{\eta}{2}\sum_{i=1}^d\frac{\left|\nabla_i f(\x^k)-\widetilde\m_i^k-\lambda\x_i^k\sqrt{\v_i^k+\varepsilon}\right|^2}{\sqrt{\v_i^k+\varepsilon}}+\frac{L\eta^2}{2}\sum_{i=1}^d\frac{\left|\widetilde\m_i^k+\lambda\x_i^k\sqrt{\v_i^k+\varepsilon}\right|^2}{\v_i^k+\varepsilon}\Big|\F_{k-1}\right]\hspace*{-1.5cm}\\
\leq& \E_k\left[-\frac{\eta}{2}\sum_{i=1}^d\frac{\left|\nabla_i f(\x^k)\right|^2}{\sqrt{\v_i^k+\varepsilon}}-\frac{\eta}{2}\sum_{i=1}^d\frac{\left|\widetilde\m_i^k+\lambda\x_i^k\sqrt{\v_i^k+\varepsilon}\right|^2}{\sqrt{\v_i^k+\varepsilon}}\right.\\
&\qquad\left.+\eta\sum_{i=1}^d\frac{\left|\nabla_i f(\x^k)-\widetilde\m_i^k\right|^2+\left|\lambda\x_i^k\sqrt{\v_i^k+\varepsilon}\right|^2}{\sqrt{\v_i^k+\varepsilon}}+\frac{L\eta^2}{2\sqrt{\varepsilon}}\sum_{i=1}^d\frac{\left|\widetilde\m_i^k+\lambda\x_i^k\sqrt{\v_i^k+\varepsilon}\right|^2}{\sqrt{\v_i^k+\varepsilon}}\Big|\F_{k-1}\right]\hspace*{-0.6cm}\\
\overset{(1)}\leq& \E_k\left[-\frac{\eta}{2}\sum_{i=1}^d\frac{\left|\nabla_i f(\x^k)\right|^2}{\sqrt{\v_i^k+\varepsilon}}-\frac{\eta}{4}\sum_{i=1}^d\frac{\left|\widetilde\m_i^k+\lambda\x_i^k\sqrt{\v_i^k+\varepsilon}\right|^2}{\sqrt{\v_i^k+\varepsilon}}+\frac{\eta}{\sqrt{\varepsilon}}\left\|\nabla f(\x^k)-\widetilde\m^k\right\|^2\right.\\
&\qquad\left.+\eta\sum_{i=1}^d|\lambda\x_i^k|^2\sqrt{\v_i^k+\varepsilon}\Big|\F_{k-1}\right],
\end{aligned}
\end{eqnarray}
where we let $\eta\leq \frac{\sqrt{\varepsilon}}{2L}$ in $\overset{(1)}\leq$. Plugging (\ref{l4-equ2}) in lemma \ref{lemma3} into the above inequality, then for any $k\geq 2$, we have
\begin{eqnarray}
\begin{aligned}\label{equ2}
&\E_k\left[f(\x^{k+1})|\F_{k-1}\right]-f(\x^k)\\
\leq& \E_k\left[-\frac{\eta}{2}\sum_{i=1}^d\frac{\left|\nabla_i f(\x^k)\right|^2}{\sqrt{\v_i^k+\varepsilon}}-\frac{\eta}{4}\sum_{i=1}^d\frac{\left|\widetilde\m_i^k+\lambda\x_i^k\sqrt{\v_i^k+\varepsilon}\right|^2}{\sqrt{\v_i^k+\varepsilon}}+\frac{\eta\theta}{\sqrt{\varepsilon}}\left\|\m^{k-1}-\nabla f(\x^{k-1})\right\|^2\right.\\
&\qquad\left.+\frac{L^2\eta^3}{\varepsilon(1-\theta)}\hspace*{-0.05cm}\sum_{i=1}^d\hspace*{-0.05cm}\frac{\left|\widetilde\m_i^{k-1}\hspace*{-0.05cm}+\hspace*{-0.05cm}\lambda\x_i^{k-1}\hspace*{-0.05cm}\sqrt{\v_i^{k-1}\hspace*{-0.05cm}+\hspace*{-0.05cm}\varepsilon}\right|^2}{\sqrt{\v_i^{k-1}+\varepsilon}}\hspace*{-0.05cm}+\hspace*{-0.05cm}\frac{2\eta(1\hspace*{-0.05cm}-\hspace*{-0.05cm}\theta)}{\sqrt{\varepsilon}}\sigma_s^2\hspace*{-0.05cm}+\hspace*{-0.05cm}\eta\hspace*{-0.05cm}\sum_{i=1}^d\hspace*{-0.05cm}|\lambda\x_i^k|^2\hspace*{-0.05cm}\sqrt{\v_i^k\hspace*{-0.05cm}+\hspace*{-0.05cm}\varepsilon}\Big|\F_{k-1}\hspace*{-0.05cm}\right]\hspace*{-0.1cm}.\hspace*{-0.5cm}
\end{aligned}
\end{eqnarray}
Multiplying both sides of (\ref{l4-equ1}) in Lemma \ref{lemma3} by $\frac{\eta\theta}{\sqrt{\varepsilon}(1-\theta)}$, adding it to (\ref{equ2}), and rearranging the terms, then for any $k\geq2$, we have 
\begin{eqnarray}
\hspace*{-0.4cm}\begin{aligned}\notag
&\E_k\left[f(\x^{k+1})-f^* +\frac{\eta\theta}{\sqrt{\varepsilon}(1-\theta)}\left\|\nabla f(\x^k)-\m^k\right\|^2+\frac{\eta}{4}\sum_{i=1}^d\frac{\left|\widetilde\m_i^k+\lambda\x_i^k\sqrt{\v_i^k+\varepsilon}\right|^2}{\sqrt{\v_i^k+\varepsilon}}\Big|\F_{k-1}\right]\hspace*{-0.8cm}\\
\leq& f(\x^k)-f^*+\E_k\left[-\frac{\eta}{2}\sum_{i=1}^d\frac{\left|\nabla_i f(\x^k)\right|^2}{\sqrt{\v_i^k+\varepsilon}}+\eta\sum_{i=1}^d|\lambda\x_i^k|^2\sqrt{\v_i^k+\varepsilon}\Big|\F_{k-1}\right]
\end{aligned}
\end{eqnarray}
\begin{eqnarray}
\begin{aligned}\label{equ3}
&+\hspace*{-0.05cm}\frac{\eta\theta}{\sqrt{\varepsilon}(1\hspace*{-0.07cm}-\hspace*{-0.07cm}\theta)}\hspace*{-0.07cm}\left\|\nabla f(\x^{k-1})\hspace*{-0.07cm}-\hspace*{-0.07cm}\m^{k-1}\right\|^2\hspace*{-0.07cm}+\hspace*{-0.07cm}\left(\hspace*{-0.07cm}\frac{L^2\eta^3}{\varepsilon(1\hspace*{-0.07cm}-\hspace*{-0.07cm}\theta)}\hspace*{-0.07cm}+\hspace*{-0.07cm}\frac{L^2\eta^3\theta}{\varepsilon(1\hspace*{-0.07cm}-\hspace*{-0.07cm}\theta)^2}\hspace*{-0.07cm}\right)\hspace*{-0.07cm}\sum_{i=1}^d\hspace*{-0.07cm}\frac{\left|\widetilde\m_i^{k-1}\hspace*{-0.07cm}+\hspace*{-0.07cm}\lambda\x_i^{k-1}\sqrt{\v_i^{k-1}\hspace*{-0.07cm}+\hspace*{-0.07cm}\varepsilon}\right|^2}{\sqrt{\v_i^{k-1}+\varepsilon}}\hspace*{-0.8cm}\\
& +\frac{2\eta(1-\theta)\sigma_s^2}{\sqrt{\varepsilon}}+\frac{\eta\theta(1-\theta)\sigma_s^2}{\sqrt{\varepsilon}}\hspace*{-0.7cm}\\
\overset{(3)}\leq& f(\x^k)-f^*+\E_k\left[\underbrace{-\frac{\eta}{2}\sum_{i=1}^d\frac{\left|\nabla_i f(\x^k)\right|^2}{\sqrt{\v_i^k+\varepsilon}}}_{\text{\rm term (a)}}+\underbrace{\eta\sum_{i=1}^d|\lambda\x_i^k|^2\sqrt{\v_i^k+\varepsilon}}_{\text{\rm term (b)}}\Big|\F_{k-1}\right]\\
&+\frac{\eta\theta}{\sqrt{\varepsilon}(1-\theta)}\left\|\nabla f(\x^{k-1})-\m^{k-1}\right\|^2+\frac{\eta}{4}\sum_{i=1}^d\frac{\left|\widetilde\m_i^{k-1}+\lambda\x_i^{k-1}\sqrt{\v_i^{k-1}+\varepsilon}\right|^2}{\sqrt{\v_i^{k-1}+\varepsilon}}+\frac{3\eta(1-\theta)\sigma_s^2}{\sqrt{\varepsilon}},\hspace*{-0.7cm}
\end{aligned}
\end{eqnarray}
where we let $\eta^2\leq\frac{\varepsilon(1-\theta)^2}{4L^2}$ such that $\frac{L^2\eta^3}{\varepsilon(1-\theta)}+\frac{L^2\eta^3\theta}{\varepsilon(1-\theta)^2}=\frac{L^2\eta^3}{\varepsilon(1-\theta)^2}\leq\frac{\eta}{4}$ in $\overset{(3)}\leq$. 

We can recursively eliminate certain terms in the above inequality after telescoping, except for the troublesome term (b). The following outlines the key technical components of our proof to address term (b) and achieve tight convergence rate. 

\noindent\textbf{1. Bounding $|\lambda\x_i^k|$ in Term (b) by $\frac{\sqrt{\nu}}{K^{1/4}}$ for Some Constant $\nu$}.

\noindent From Lemma \ref{lemma2}, we have
\begin{eqnarray}
\begin{aligned}\notag
\|\x^{k+1}\|_{\infty}-\frac{3}{\lambda}\leq (1-\eta\lambda)^k\left(\|\x^1\|_{\infty}-\frac{3}{\lambda}\right).
\end{aligned}
\end{eqnarray}
Since $\ln x\leq x-1$ and $e^x\geq x+1$ for any $x>0$ and letting $\eta\lambda\leq\frac{\sqrt{\nu}}{5K^{5/4}}$ and $\frac{\sqrt{\nu}}{K^{1/4}}<1$, we have for any $k\leq K$ that
\begin{eqnarray}
\begin{aligned}\notag
k\ln(1-\eta\lambda)=-k\ln\frac{1}{1-\eta\lambda}\geq -K\left(\frac{1}{1-\eta\lambda}-1\right)=-\frac{K\eta\lambda}{1-\eta\lambda}\geq -\frac{\sqrt{\nu}}{4K^{1/4}}
\end{aligned}
\end{eqnarray}
and
\begin{eqnarray}
\begin{aligned}\notag
(1-\eta\lambda)^k\geq e^{-\frac{\sqrt{\nu}}{4K^{1/4}}}\geq 1-\frac{\sqrt{\nu}}{4K^{1/4}}.
\end{aligned}
\end{eqnarray}
Initializing $\|\x^1\|_{\infty}\leq\frac{\sqrt{\nu}}{4K^{1/4}\lambda}$, we have
\begin{eqnarray}
\begin{aligned}\notag
\|\x^{k+1}\|_{\infty}-\frac{3}{\lambda}\leq& -\frac{1}{\lambda}(1-\eta\lambda)^k\left(3-\frac{\sqrt{\nu}}{4K^{1/4}}\right)\\
\leq& -\frac{1}{\lambda}\left(1-\frac{\sqrt{\nu}}{4K^{1/4}}\right)\left(3-\frac{\sqrt{\nu}}{4K^{1/4}}\right)\leq-\frac{3}{\lambda}+\frac{1}{\lambda}\frac{\sqrt{\nu}}{K^{1/4}},
\end{aligned}
\end{eqnarray}
leading to $\lambda\|\x^k\|_{\infty}\leq \frac{\sqrt{\nu}}{K^{1/4}}$ for any $k=1,2,\cdots,K$.

\noindent\textbf{2. Relaxing Term (b) and Absorbing it within Term (a)}.

\noindent Denote
\begin{eqnarray}
\begin{aligned}\label{wide-v-def}
\widetilde\v_i^k=\beta\v_i^{k-1}+(1-\beta)\left(\left|\nabla_i f(\x^k)\right|^2+\sigma_i^2\right).
\end{aligned}
\end{eqnarray}
From the concavity of $\sqrt{x}$ and $-\frac{1}{\sqrt{x}}$ and Assumptions 2 and 3, and recalling $\g^k=\nabla f(\x^k;\zeta^k)$, we have
\begin{eqnarray}
\begin{aligned}\label{equ8}
&\E_k\left[\sqrt{\v_i^k+\varepsilon}|\F_{k-1}\right]\leq \sqrt{\E_k\left[\v_i^k|\F_{k-1}\right]+\varepsilon}= \sqrt{\beta\v_i^{k-1}+(1-\beta)\E_k\left[|\g_i^k|^2\big|\F_{k-1}\right]+\varepsilon}\\
&\hspace*{3.05cm}\leq\sqrt{\beta\v_i^{k-1}+(1-\beta)\left(\left|\nabla_i f(\x^k)\right|^2+\sigma_i^2\right)+\varepsilon}=\sqrt{\widetilde\v_i^k+\varepsilon},\\
&-\E_k\left[\frac{\left|\nabla_i f(\x^k)\right|^2}{\sqrt{\v_i^k+\varepsilon}}\big|\F_{k-1}\right]\leq -\frac{\left|\nabla_i f(\x^k)\right|^2}{\sqrt{\E_k\left[\v_i^k\big|\F_{k-1}\right]+\varepsilon}}\leq -\frac{\left|\nabla_i f(\x^k)\right|^2}{\sqrt{\widetilde\v_i^k+\varepsilon}}.
\end{aligned}
\end{eqnarray}
First using $\lambda\|\x^k\|_{\infty}\leq \frac{\sqrt{\nu}}{K^{1/4}}$ and then plugging (\ref{equ8}) into (\ref{equ1}) and (\ref{equ3}), we have
\begin{eqnarray}
\begin{aligned}\label{equ4}
&\E_k\left[f(\x^{k+1})-f^* + \frac{\eta}{4}\sum_{i=1}^d\frac{\left|\widetilde\m_i^k+\lambda\x_i^k\sqrt{\v_i^k+\varepsilon}\right|^2}{\sqrt{\v_i^k+\varepsilon}}  \Big|\F_{k-1}\right]\\
\leq& f(\x^k)\hspace*{-0.06cm}-\hspace*{-0.06cm}f^*\hspace*{-0.02cm}\underbrace{-\frac{\eta}{2}\sum_{i=1}^d\frac{\left|\nabla_i f(\x^k)\right|^2}{\sqrt{\widetilde\v_i^k+\varepsilon}}}_{\text{\rm term (c)}}+\underbrace{\frac{\eta\nu}{K^{1/2}}\sum_{i=1}^d\sqrt{\widetilde\v_i^k+\varepsilon}}_{\text{\rm term (d)}} + \frac{\eta}{\sqrt{\varepsilon}}\left\|\nabla f(\x^k)-\widetilde\m^k\right\|^2
\end{aligned}
\end{eqnarray}
and
\begin{eqnarray}
\begin{aligned}\label{equ5}
&\E_k\left[f(\x^{k+1})-f^* +\frac{\eta\theta}{\sqrt{\varepsilon}(1-\theta)}\left\|\nabla f(\x^k)-\m^k\right\|^2+\frac{\eta}{4}\sum_{i=1}^d\frac{\left|\widetilde\m_i^k+\lambda\x_i^k\sqrt{\v_i^k+\varepsilon}\right|^2}{\sqrt{\v_i^k+\varepsilon}}\Big|\F_{k-1}\right]\hspace*{-0.5cm}\\
\leq& f(\x^k)-f^*\underbrace{-\frac{\eta}{2}\sum_{i=1}^d\frac{\left|\nabla_i f(\x^k)\right|^2}{\sqrt{\widetilde\v_i^k+\varepsilon}}}_{\text{\rm term (c)}}+\underbrace{\frac{\eta\nu}{K^{1/2}}\sum_{i=1}^d\sqrt{\widetilde\v_i^k+\varepsilon}}_{\text{\rm term (d)}}\\
&+\frac{\eta\theta}{\sqrt{\varepsilon}(1-\theta)}\left\|\nabla f(\x^{k-1})-\m^{k-1}\right\|^2+\frac{\eta}{4}\sum_{i=1}^d\frac{\left|\widetilde\m_i^{k-1}+\lambda\x_i^{k-1}\sqrt{\v_i^{k-1}+\varepsilon}\right|^2}{\sqrt{\v_i^{k-1}+\varepsilon}}+\frac{3\eta(1-\theta)\sigma_s^2}{\sqrt{\varepsilon}}.\hspace*{-0.7cm}
\end{aligned}
\end{eqnarray} 
For both (\ref{equ4}) and (\ref{equ5}), taking expectation with respect to $\F_{k-1}$ and summing (\ref{equ4}) with $k=1$ and (\ref{equ5}) over $k=2,3,\cdots,K$, we have
\begin{eqnarray}
\begin{aligned}\label{equ6}
&\E_{\F_K}\left[f(\x^{K+1})-f^* + \frac{\eta\theta}{\sqrt{\varepsilon}(1-\theta)}\left\|\nabla f(\x^K)-\m^K\right\|^2 + \frac{\eta}{4}\sum_{i=1}^d\frac{\left|\widetilde\m_i^K+\lambda\x_i^K\sqrt{\v_i^K+\varepsilon}\right|^2}{\sqrt{\v_i^K+\varepsilon}} \right]\hspace*{-2cm}\\
\leq& f(\x^1)-f^*-\frac{\eta}{2}\sum_{k=1}^K\sum_{i=1}^d\E_{\F_{k-1}}\left[\frac{\left|\nabla_i f(\x^k)\right|^2}{\sqrt{\widetilde\v_i^k+\varepsilon}}\right]+\frac{\eta\nu}{K^{1/2}}\sum_{k=1}^K\sum_{i=1}^d\E_{\F_{k-1}}\left[\sqrt{\widetilde\v_i^k+\varepsilon}\right]\\
&+\frac{\eta\theta}{\sqrt{\varepsilon}(1-\theta)}\E_{\F_1}\left[\|\nabla f(\x^1)-\m^1\|^2\right]+ \frac{\eta}{\sqrt{\varepsilon}}\E_{\F_1}\left[\|\nabla f(\x^1)-\widetilde\m^1\|^2\right]+\frac{3(K-1)\eta(1-\theta)\sigma_s^2}{\sqrt{\varepsilon}}.\hspace*{-0.6cm}
\end{aligned}
\end{eqnarray} 
As the gradient is $L$-Lipschitz, we have
\begin{eqnarray}
\begin{aligned}\notag
f^*\hspace*{-0.02cm}\leq\hspace*{-0.02cm} f\left(\x\hspace*{-0.02cm}-\hspace*{-0.02cm}\frac{1}{L}\nabla f(\x)\right)\hspace*{-0.02cm}\leq\hspace*{-0.02cm} f(\x)\hspace*{-0.02cm}-\hspace*{-0.02cm}\frac{1}{L}\<\nabla f(\x),\nabla f(\x)\>\hspace*{-0.02cm}+\hspace*{-0.02cm}\frac{L}{2}\left\|\frac{1}{L}\nabla f(\x)\right\|^2\hspace*{-0.02cm}=\hspace*{-0.02cm}f(\x)\hspace*{-0.02cm}-\hspace*{-0.02cm}\frac{1}{2L}\|\nabla f(\x)\|^2.
\end{aligned}
\end{eqnarray}
Using the recursions of $\m^1$ and $\widetilde\m^1$, $\m^0=0$, and $\theta\leq\tau\leq 1$, we have
\begin{eqnarray}
\begin{aligned}\notag
\E_{\F_1}\left[\|\nabla f(\x^1)-\m^1\|^2\right]=&\E_{\F_1}\left[\|\theta\nabla f(\x^1)+(1-\theta)(\nabla f(\x^1)-\g^1)\|^2\right]\\
=& \theta^2\|\nabla f(\x^1)\|^2 + (1-\theta)^2\E_{\F_1}\left[\|\nabla f(\x^1)-\g^1\|^2\right]\\
\leq& 2L(f(\x^1)-f^*) + (1-\theta)^2\sigma_s^2
\end{aligned}
\end{eqnarray}
and
\begin{eqnarray}
\begin{aligned}\notag
\E_{\F_1}\left[\|\nabla f(\x^1)-\widetilde\m^1\|^2\right]=&\E_{\F_1}\left[\|\tau\theta\nabla f(\x^1)+(1-\tau\theta)(\nabla f(\x^1)-\g^1)\|^2\right]\\
=& \tau^2\theta^2\|\nabla f(\x^1)\|^2 + (1-\tau\theta)^2\E_{\F_1}\left[\|\nabla f(\x^1)-\g^1\|^2\right]\\
\leq& 2L(f(\x^1)-f^*) + (1-\theta^2)^2\sigma_s^2\\
\leq& 2L(f(\x^1)-f^*) + 4(1-\theta)^2\sigma_s^2.
\end{aligned}
\end{eqnarray}
Plugging into (\ref{equ6}), we have
\begin{eqnarray}
\begin{aligned}\label{equ7}
&\E_{\F_K}\left[f(\x^{K+1})-f^* + \frac{\eta\theta}{\sqrt{\varepsilon}(1-\theta)}\left\|\nabla f(\x^K)-\m^K\right\|^2 + \frac{\eta}{4}\sum_{i=1}^d\frac{\left|\widetilde\m_i^K+\lambda\x_i^K\sqrt{\v_i^K+\varepsilon}\right|^2}{\sqrt{\v_i^K+\varepsilon}} \right]\hspace*{-2cm}\\
\leq& f(\x^1)-f^*-\frac{\eta}{2}\sum_{k=1}^K\sum_{i=1}^d\E_{\F_{k-1}}\left[\frac{\left|\nabla_i f(\x^k)\right|^2}{\sqrt{\widetilde\v_i^k+\varepsilon}}\right]+\frac{\eta\nu}{K^{1/2}}\sum_{k=1}^K\sum_{i=1}^d\E_{\F_{k-1}}\left[\sqrt{\widetilde\v_i^k+\varepsilon}\right]\\
&+\frac{\eta}{\sqrt{\varepsilon}(1-\theta)}\left(2L(f(\x^1)-f^*) + 4(1-\theta)^2\sigma_s^2\right)+\frac{3(K-1)\eta(1-\theta)\sigma_s^2}{\sqrt{\varepsilon}}\\
\overset{(4)}\leq& f(\x^1)-f^*\underbrace{-\frac{\eta}{2}\sum_{k=1}^K\sum_{i=1}^d\E_{\F_{k-1}}\left[\frac{\left|\nabla_i f(\x^k)\right|^2}{\sqrt{\widetilde\v_i^k+\varepsilon}}\right]}_{\text{\rm term (e)}}+\frac{2\eta L(f(\x^1)-f^*)}{\sqrt{\varepsilon}(1-\theta)} +\frac{4K\eta(1-\theta)\sigma_s^2}{\sqrt{\varepsilon}}\\
&+\frac{\eta\nu}{K^{1/2}}\left(\underbrace{K\|\bsigma\|_1+Kd\sqrt{\varepsilon}}_{\text{\rm term (f)}}+\underbrace{2\sum_{k=1}^K\sum_{i=1}^d\E_{\F_{k-1}}\left[\frac{|\nabla_i f(\x^k)|^2}{\sqrt{\widetilde\v_i^k+\varepsilon}}\right]}_{\text{\rm term (g)}}\right)\\
\overset{(5)}\leq& f(\x^1)\hspace*{-0.06cm}-\hspace*{-0.06cm}f^*\hspace*{-0.06cm}-\hspace*{-0.06cm}\frac{\eta}{4}\hspace*{-0.06cm}\sum_{k=1}^K\hspace*{-0.06cm}\sum_{i=1}^d\hspace*{-0.06cm}\E_{\F_{k-1}}\hspace*{-0.06cm}\left[\frac{\left|\nabla_i f(\x^k)\right|^2}{\sqrt{\widetilde\v_i^k+\varepsilon}}\right]\hspace*{-0.06cm}+\hspace*{-0.06cm}\frac{2\eta L(f(\x^1)\hspace*{-0.06cm}-\hspace*{-0.06cm}f^*)}{\sqrt{\varepsilon}(1-\theta)} \hspace*{-0.06cm}+\hspace*{-0.06cm}\frac{4K\eta(1\hspace*{-0.06cm}-\hspace*{-0.06cm}\theta)\sigma_s^2}{\sqrt{\varepsilon}}\hspace*{-0.06cm}+\hspace*{-0.06cm}2\eta\nu d\sqrt{K\varepsilon},\hspace*{-0.5cm}
\end{aligned}
\end{eqnarray}
where we use Lemma \ref{lemma4} in $\overset{(4)}\leq$, let $\frac{\nu}{K^{1/2}}\leq \frac{1}{8}$ and $\varepsilon\geq\frac{\sigma_s^2}{d}$ such that $\|\bsigma\|_1\leq\sqrt{d}\|\bsigma\|_2=\sqrt{d}\sigma_s\leq d\sqrt{\varepsilon}$ in $\overset{(5)}\leq$.
 
So from (\ref{equ7}), we have
\begin{eqnarray}
\begin{aligned}\label{equ9}
&\sum_{k=1}^K\sum_{i=1}^d\E_{\F_{k-1}}\left[\frac{\left|\nabla_i f(\x^k)\right|^2}{\sqrt{\widetilde\v_i^k+\varepsilon}}\right]\\
\leq& \frac{4(f(\x^1)-f^*)}{\eta}+\frac{8L( f(\x^1)-f^* )}{\sqrt{\varepsilon}(1-\theta)}+\frac{16K(1-\theta)\sigma_s^2}{\sqrt{\varepsilon}}+8\nu d\sqrt{K\varepsilon}\\
\leq& \frac{4(f(\x^1)-f^*)}{\eta}+\frac{8L( f(\x^1)-f^* )}{\sqrt{\varepsilon}(1-\theta)}+\frac{16K(1-\theta)\hat\sigma_s^2}{\sqrt{\varepsilon}}+8\nu d\sqrt{K\varepsilon},
\end{aligned}
\end{eqnarray}
where we denote $\hat\sigma_s^2=\max\left\{\sigma_s^2,\frac{L(f(\x^1)-f^*)}{K\gamma^2}\right\}$ with any constant $\gamma\in(0,1]$ to incorporate the scenario when $\sigma_s^2\leq\frac{L(f(\x^1)-f^*)}{K}$.

Recall that we require the parameters satisfying the following relations in the above proof:
\begin{eqnarray}
\begin{aligned}\notag
\eta\leq \frac{\sqrt{\varepsilon}}{2L},\hspace*{0.15cm}\eta^2\leq\frac{\varepsilon(1-\theta)^2}{4L^2},\hspace*{0.15cm}\eta\lambda\leq\frac{\sqrt{\nu}}{5K^{5/4}},\hspace*{0.15cm} \frac{\sqrt{\nu}}{K^{1/4}}<1,\hspace*{0.15cm}\|\x^1\|_{\infty}\leq\frac{\sqrt{\nu}}{4K^{1/4}\lambda},\hspace*{0.15cm}\frac{\nu}{K^{1/2}}\leq \frac{1}{8},\hspace*{0.15cm}\varepsilon\geq\frac{\sigma_s^2}{d}.
 \end{aligned}
\end{eqnarray}
Recalling $\hat\sigma_s^2\geq \frac{L(f(\x^1)-f^*)}{K\gamma^2}$ and letting $\varepsilon=\frac{\hat\sigma_s^2}{d}$, $1-\theta=\sqrt{\frac{L(f(\x^1)-f^*)}{K\hat\sigma_s^2}}$, $\eta=\sqrt{\frac{\varepsilon(f(\x^1)-f^*)}{4K\hat\sigma_s^2 L}}=\sqrt{\frac{f(\x^1)-f^*}{4KdL}}$, $\nu=\frac{1}{2d}\sqrt{\frac{\hat\sigma_s^2L(f(\x^1)-f^*)}{\varepsilon^2}}=\frac{1}{2}\sqrt{\frac{L(f(\x^1)-f^*)}{\hat\sigma_s^2}}$, $\lambda\leq\frac{\sqrt{\nu}}{5K^{5/4}\eta}=\frac{\sqrt{2d}}{5K^{3/4}}\sqrt[4]{\frac{L^3}{\hat\sigma_s^2(f(\x^1)-f^*)}}$, and $\|\x^1\|_{\infty}\leq\frac{5}{8}\sqrt{\frac{K(f(\x^1)-f^*)}{dL}}=\frac{5K\eta}{4}\leq\frac{\sqrt{\nu}}{4K^{1/4}\lambda}$, the above requirements are satisfied. So we have from (\ref{equ9}) that
\begin{eqnarray}
\begin{aligned}\label{equ10}
&\sum_{k=1}^K\sum_{i=1}^d\E_{\F_{k-1}}\left[\frac{\left|\nabla_i f(\x^k)\right|^2}{\sqrt{\widetilde\v_i^k+\varepsilon}}\right]\leq36\sqrt{\frac{K\hat\sigma_s^2L(f(\x^1)-f^*)}{\varepsilon}}.
\end{aligned}
\end{eqnarray}

\noindent\textbf{3. Eliminating $\varepsilon$ in the Final Convergence Rate}.

\noindent Using Holder's inequality and Lemma \ref{lemma4}, we have
\begin{eqnarray}
\begin{aligned}\notag
&\left(\sum_{k=1}^K\E_{\F_{k-1}}\left[\|\nabla f(\x^k)\|_1\right]\right)^2\\
\leq&\left(\sum_{k=1}^K\sum_{i=1}^d\E_{\F_{k-1}}\left[\frac{\left|\nabla_i f(\x^k)\right|^2}{\sqrt{\widetilde\v_i^k+\varepsilon}}\right]\right)\left(\sum_{k=1}^K\sum_{i=1}^d\E_{\F_{k-1}}\left[\sqrt{\widetilde\v_i^k+\varepsilon}\right]\right)\\
\leq&\left(\sum_{k=1}^K\sum_{i=1}^d\E_{\F_{k-1}}\left[\frac{\left|\nabla_i f(\x^k)\right|^2}{\sqrt{\widetilde\v_i^k+\varepsilon}}\right]\right)\left(K\|\bsigma\|_1+Kd\sqrt{\varepsilon}+2\sum_{k=1}^K\sum_{i=1}^d\E_{\F_{k-1}}\left[\frac{\left|\nabla_i f(\x^k)\right|^2}{\sqrt{\widetilde\v_i^k+\varepsilon}}\right]\right)\\
\leq&\left(36\sqrt{\frac{K\hat\sigma_s^2L(f(\x^1)-f^*)}{\varepsilon}}\right)\left(72\sqrt{\frac{K\hat\sigma_s^2L(f(\x^1)-f^*)}{\varepsilon}}+K\|\bsigma\|_1+Kd\sqrt{\varepsilon}\right)
\end{aligned}
\end{eqnarray}
and
\begin{eqnarray}
\begin{aligned}\notag
&\frac{1}{K}\sum_{k=1}^K\E_{\F_{k-1}}\left[\|\nabla f(\x^k)\|_1\right]\\
\leq&\frac{1}{K}\left(51\sqrt{\frac{K\hat\sigma_s^2L(f(\x^1)-f^*)}{\varepsilon}}+6\sqrt[4]{\underbrace{\frac{K\hat\sigma_s^2L(f(\x^1)-f^*)}{\varepsilon}\left(K\|\bsigma\|_1+Kd\sqrt{\varepsilon}\right)^2}_{\text{\rm term (h)}}}\right).
\end{aligned}
\end{eqnarray}
The above convergence rate is not optimal due to its explicit dependence on $\varepsilon$, which is absent from the optimal rate (\ref{SGD-rate}) of SGD. By setting $\varepsilon=\frac{\hat\sigma_s^2}{d}$, we obtain $K\|\bsigma\|_1\leq Kd\sqrt{\varepsilon}$ and $\left(K\|\bsigma\|_1+Kd\sqrt{\varepsilon}\right)^2\leq 4K^2d^2\varepsilon$, which allows us to eliminate $\varepsilon$ in the denominator of term (h). This yields the following final convergence rate
\begin{eqnarray}
\begin{aligned}\notag
\frac{1}{K}\sum_{k=1}^K\E_{\F_{k-1}}\left[\|\nabla f(\x^k)\|_1\right]\leq& 51\sqrt{\frac{dL(f(\x^1)-f^*)}{K}}+\frac{9\sqrt{d}}{K^{1/4}}\sqrt[4]{\hat\sigma_s^2L(f(\x^1)-f^*)}.
\end{aligned}
\end{eqnarray}
At last, from $\lambda\|\x^k\|_{\infty}\leq \frac{\sqrt{\nu}}{K^{1/4}}$ and the settings of $\nu$ and $\hat\sigma_s$, we have
\begin{eqnarray}
\begin{aligned}\notag
&\lambda\|\x^k\|_{\infty}\leq \frac{\sqrt{\nu}}{K^{1/4}}=\frac{1}{\sqrt{2}}\sqrt[4]{\frac{L(f(\x^1)-f^*)}{K\hat\sigma_s^2}}<1
\end{aligned}
\end{eqnarray}
for all $ k=1,2,\cdots,K$, leading to $\|\x^k\|_{\infty}<\frac{1}{\lambda}$.
\end{proof}

In the end, we summary the key proof ideas in the following two Remarks.
\begin{remark}\label{remark1}
\textbf{Managing Weight Decay}. 

To manage the troublesome term (b) introduced by weight decay, we first bound $|\lambda\x_i^k|$ in term (b) by $\frac{\sqrt{\nu}}{K^{1/4}}$ and relax terms (a) and (b) to terms (c) and (d), respectively. Then we can bound the second-order momentum in term (d) by Lemma \ref{lemma4} and absorb term (g) into term (e). By carefully choosing the bound of $|\lambda\x_i^k|$, we can control the magnitude of term (f). This sequence of operations enables us to manage the troublesome term (b) and derive the bound (\ref{equ10}). 
\end{remark}
\begin{remark}\label{remark2}
\textbf{Eliminating the Dependence on $\varepsilon$}. 

Finally, we eliminate the dependence on $\varepsilon$ in term (h) by the tight bound of the second-order momentum in Lemma \ref{lemma4} and the careful configuration of $\varepsilon$. Although smaller value of $\varepsilon$ does not affect the convergence, this term cannot be eliminated any more and consequently slows the convergence rate by introducing explicit $\varepsilon$-dependence. On the other hand, while larger $\varepsilon$ does not impact the convergence rate, it makes AdamW closer to SGD.
\end{remark}
\subsection{Supporting Lemmas}\label{sec-supporting-lemmas}

This section provides several supporting lemmas to prove Theorem \ref{theorem2}. The first part of the following lemma bounds the update in each step and the second part bounds $\|\x^k\|_{\infty}$ for any $k$, which is a property unique to weight decay and can be found in \citep{xie-2024-adamw-icml}. We list the proof in the Appendix for the completeness.
\begin{lemma}\label{lemma2}
Suppose $\m^0=0$, $\v^0=0$, $\theta\leq\tau\leq 1$, and $\theta\leq \beta\leq\sqrt{\theta}<1$, then we have
\begin{eqnarray}
\begin{aligned}\notag
\frac{|\widetilde\m_i^k|^2}{\v_i^k}\leq 8,\qquad\mbox{and}\qquad\|\x^{k+1}\|_{\infty}-\frac{3}{\lambda}\leq (1-\eta\lambda)^k\left(\|\x^1\|_{\infty}-\frac{3}{\lambda}\right).
\end{aligned}
\end{eqnarray}
\end{lemma}

The following lemma establishes a recursion to bound the approximation error between the momentum and gradient.
\begin{lemma}\label{lemma3}
Suppose that Assumptions 1-3 hold. Let $\theta\leq\tau\leq 1$. Then for any $k\geq 2$, we have
\begin{eqnarray}
\begin{aligned}\label{l4-equ1}
&\E_k\left[\|\m^k-\nabla f(\x^k)\|^2|\F_{k-1}\right]\\
\leq&\theta\left\|\m^{k-1}-\nabla f(\x^{k-1})\right\|^2 + \frac{L^2\eta^2}{\sqrt{\varepsilon}(1-\theta)}\sum_{i=1}^d\frac{\left|\widetilde\m_i^{k-1}+\lambda\x_i^{k-1}\sqrt{\v_i^{k-1}+\varepsilon}\right|^2}{\sqrt{\v_i^{k-1}+\varepsilon}}+(1-\theta)^2\sigma_s^2
\end{aligned}
\end{eqnarray}
and
\begin{eqnarray}
\begin{aligned}\label{l4-equ2}
&\E_k\left[\left\|\widetilde\m^k-\nabla f(\x^k)\right\|^2|\F_{k-1}\right]\\
\leq& \theta\left\|\m^{k-1}-\nabla f(\x^{k-1})\right\|^2 + \frac{L^2\eta^2}{\sqrt{\varepsilon}(1-\theta)}\sum_{i=1}^d\frac{\left|\widetilde\m_i^{k-1}+\lambda\x_i^{k-1}\sqrt{\v_i^{k-1}+\varepsilon}\right|^2}{\sqrt{\v_i^{k-1}+\varepsilon}}+2(1-\theta)\sigma_s^2.
\end{aligned}
\end{eqnarray}
\end{lemma}
\begin{proof}
Denoting $\xi^k=\g^k-\nabla f(\x^k)$ and recalling $\g^k=\nabla f(\x^k;\zeta^k)$, we have $\E_k[\xi^k|\F_{k-1}]=0$ and $\E_k[\|\xi^k\|^2|\F_{k-1}]\leq\sigma_s^2$. From the updates of $\m^k$ and $\widetilde\m^k$ in Algorithm \ref{nadamw}, we have
\begin{eqnarray}
\begin{aligned}\notag
\m^k-\nabla f(\x^k)=&\theta\m^{k-1}+(1-\theta)\g^k - \nabla f(\x^k)\\
=&\theta\left(\m^{k-1}-\nabla f(\x^{k-1})\right)+(1-\theta)\left(\nabla f(\x^k)+\xi^k\right) - \nabla f(\x^k) + \theta\nabla f(\x^{k-1})\\
=&\theta\left(\m^{k-1}-\nabla f(\x^{k-1})\right)+(1-\theta)\xi^k - \theta\left(\nabla f(\x^k)-\nabla f(\x^{k-1})\right)
\end{aligned}
\end{eqnarray}
and
\begin{eqnarray}
\begin{aligned}\notag
\widetilde\m^k-\nabla f(\x^k)=&\tau\m^k+(1-\tau)\g^k - \nabla f(\x^k)\\
=&\tau\theta\m^{k-1}+\tau(1-\theta)\g^k+(1-\tau)\g^k - \nabla f(\x^k)\\
=&\tau\theta\left(\m^{k-1}\hspace*{-0.06cm}-\hspace*{-0.06cm}\nabla f(\x^{k-1})\right)\hspace*{-0.06cm}+\hspace*{-0.06cm}(1-\tau\theta)\left(\nabla f(\x^k)+\xi^k\right) - \nabla f(\x^k) + \tau\theta\nabla f(\x^{k-1})\\
=&\tau\theta\left(\m^{k-1}-\nabla f(\x^{k-1})\right)+(1-\tau\theta)\xi^k - \tau\theta\left(\nabla f(\x^k)-\nabla f(\x^{k-1})\right).
\end{aligned}
\end{eqnarray}
So we have
\begin{eqnarray}
\begin{aligned}\notag
&\E_k\left[\|\m^k-\nabla f(\x^k)\|^2|\F_{k-1}\right]\\
=&\left\|\theta\left(\m^{k-1}-\nabla f(\x^{k-1})\right) - \theta\left(\nabla f(\x^k)-\nabla f(\x^{k-1})\right)\right\|^2+(1-\theta)^2\E_k\left[\|\xi^k\|^2|\F_{k-1}\right]\\
\leq&\theta^2\left(\hspace*{-0.06cm} \left(1+\frac{1-\theta}{\theta}\right)\hspace*{-0.06cm}\left\|\m^{k-1}\hspace*{-0.06cm}-\hspace*{-0.06cm}\nabla f(\x^{k-1})\right\|^2 \hspace*{-0.06cm}+\hspace*{-0.06cm} \left(\hspace*{-0.06cm}1\hspace*{-0.06cm}+\hspace*{-0.06cm}\frac{\theta}{1-\theta}\right)\hspace*{-0.06cm}\left\|\nabla f(\x^k)\hspace*{-0.06cm}-\hspace*{-0.06cm}\nabla f(\x^{k-1})\right\|^2 \right)\hspace*{-0.06cm}+\hspace*{-0.06cm}(1\hspace*{-0.06cm}-\hspace*{-0.06cm}\theta)^2\sigma_s^2\\
\leq&\theta\left\|\m^{k-1}-\nabla f(\x^{k-1})\right\|^2 + \frac{L^2}{1-\theta}\left\|\x^k-\x^{k-1}\right\|^2+(1-\theta)^2\sigma_s^2
\end{aligned}
\end{eqnarray} 
\begin{eqnarray}
\hspace*{-0.5cm}\begin{aligned}\notag
=&\theta\left\|\m^{k-1}-\nabla f(\x^{k-1})\right\|^2 + \frac{L^2\eta^2}{1-\theta}\sum_{i=1}^d\frac{\left|\widetilde\m_i^{k-1}+\lambda\x_i^{k-1}\sqrt{\v_i^{k-1}+\varepsilon}\right|^2}{\v_i^{k-1}+\varepsilon}+(1-\theta)^2\sigma_s^2\\
\leq&\theta\left\|\m^{k-1}-\nabla f(\x^{k-1})\right\|^2 + \frac{L^2\eta^2}{\sqrt{\varepsilon}(1-\theta)}\sum_{i=1}^d\frac{\left|\widetilde\m_i^{k-1}+\lambda\x_i^{k-1}\sqrt{\v_i^{k-1}+\varepsilon}\right|^2}{\sqrt{\v_i^{k-1}+\varepsilon}}+(1-\theta)^2\sigma_s^2
\end{aligned}
\end{eqnarray}
and
\begin{eqnarray}
\begin{aligned}\notag
&\E_k\left[\|\widetilde\m^k-\nabla f(\x^k)\|^2|\F_{k-1}\right]\\
=&\left\|\tau\theta\left(\m^{k-1}-\nabla f(\x^{k-1})\right) - \tau\theta\left(\nabla f(\x^k)-\nabla f(\x^{k-1})\right)\right\|^2+(1-\tau\theta)^2\E_k\left[\|\xi^k\|^2|\F_{k-1}\right]\\
\leq&\tau^2\theta^2\hspace*{-0.05cm}\left(\hspace*{-0.05cm} \left(\hspace*{-0.05cm}1\hspace*{-0.08cm}+\hspace*{-0.08cm}\frac{1\hspace*{-0.08cm}-\hspace*{-0.08cm}\tau\theta}{\tau\theta}\right)\hspace*{-0.08cm}\left\|\m^{k-1}\hspace*{-0.08cm}-\hspace*{-0.08cm}\nabla f(\x^{k-1})\right\|^2 \hspace*{-0.08cm}+\hspace*{-0.08cm} \left(\hspace*{-0.08cm}1\hspace*{-0.08cm}+\hspace*{-0.08cm}\frac{\tau\theta}{1\hspace*{-0.08cm}-\hspace*{-0.08cm}\tau\theta}\right)\hspace*{-0.08cm}\left\|\nabla f(\x^k)\hspace*{-0.08cm}-\hspace*{-0.08cm}\nabla f(\x^{k-1})\right\|^2 \right)\hspace*{-0.08cm}+\hspace*{-0.08cm}(1\hspace*{-0.08cm}-\hspace*{-0.08cm}\tau\theta)^2\sigma_s^2\\
\leq&\tau\theta\left\|\m^{k-1}-\nabla f(\x^{k-1})\right\|^2 + \frac{L^2}{1-\tau\theta}\left\|\x^k-\x^{k-1}\right\|^2+(1-\tau\theta)^2\sigma_s^2\\
=&\tau\theta\left\|\m^{k-1}-\nabla f(\x^{k-1})\right\|^2 + \frac{L^2\eta^2}{1-\tau\theta}\sum_{i=1}^d\frac{\left|\widetilde\m_i^{k-1}+\lambda\x_i^{k-1}\sqrt{\v_i^{k-1}+\varepsilon}\right|^2}{\v_i^{k-1}+\varepsilon}+(1-\tau\theta)^2\sigma_s^2\\
\leq&\tau\theta\left\|\m^{k-1}-\nabla f(\x^{k-1})\right\|^2 + \frac{L^2\eta^2}{\sqrt{\varepsilon}(1-\tau\theta)}\sum_{i=1}^d\frac{\left|\widetilde\m_i^{k-1}+\lambda\x_i^{k-1}\sqrt{\v_i^{k-1}+\varepsilon}\right|^2}{\sqrt{\v_i^{k-1}+\varepsilon}}+(1-\tau\theta)^2\sigma_s^2\\
\leq&\theta\left\|\m^{k-1}-\nabla f(\x^{k-1})\right\|^2 + \frac{L^2\eta^2}{\sqrt{\varepsilon}(1-\theta)}\sum_{i=1}^d\frac{\left|\widetilde\m_i^{k-1}+\lambda\x_i^{k-1}\sqrt{\v_i^{k-1}+\varepsilon}\right|^2}{\sqrt{\v_i^{k-1}+\varepsilon}}+2(1-\theta)\sigma_s^2,
\end{aligned}
\end{eqnarray}
where we use $\theta\leq\tau\leq 1$ and $(1-\tau\theta)^2\leq 1-\tau\theta\leq 1-\theta^2=(1+\theta)(1-\theta)\leq 2(1-\theta)$ in the last inequality.
\end{proof}

The following lemma gives a tight upper bound for the second-order momentum in the general adaptive gradient type methods, which is modified from \citep{lihuan-rmsprop-2024}. Intuitively, when $\nabla_i f(\x^k)\approx 0$ such that $\widetilde\v_i^k=\beta^k\v_i^0+(1-\beta)\sum_{r=1}^k\beta^{k-r}\left(\left|\nabla_i f(\x^r)\right|^2+\sigma_i^2\right)\approx \sigma_i^2$, we have $\sum_{k=1}^K\sum_{i=1}^d \sqrt{\widetilde\v_i^k+\varepsilon}\approx K\|\bsigma\|_1+Kd\sqrt{\varepsilon}$, making the term $(K\|\bsigma\|_1+Kd\sqrt{\varepsilon})$ non-negligible. 
\begin{lemma}\label{lemma4}
Suppose that Assumptions 1-3 hold. Let $0\leq\beta\leq 1$ and $\v^0=0$. Then we have
\begin{eqnarray}
\begin{aligned}\notag
\sum_{k=1}^K\sum_{i=1}^d \E_{\F_{k-1}}\left[\sqrt{\widetilde\v_i^k+\varepsilon}\right]\leq K\|\bsigma\|_1+Kd\sqrt{\varepsilon}+2\sum_{t=1}^K\sum_{i=1}^d\E_{\F_{t-1}}\left[\frac{|\nabla_i f(\x^t)|^2}{\sqrt{\widetilde\v_i^t+\varepsilon}}\right].
\end{aligned}
\end{eqnarray}
\end{lemma}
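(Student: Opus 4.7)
The plan is to establish the inequality per coordinate $i$ and then sum over $i,k$. I will proceed pointwise first and then take expectations, exploiting that $\widetilde\v_i^k$ is $\F_{k-1}$-measurable (since it depends only on $\v_i^{k-1}$ and $\nabla_i f(\x^k)$). The workhorse identity is $\sqrt{x} = x/\sqrt{x}$. Applied to $\widetilde\v_i^k+\varepsilon$ and expanding the numerator by the EMA recursion, I get
\begin{equation*}
\sqrt{\widetilde\v_i^k+\varepsilon} \;=\; \frac{\beta \v_i^{k-1}}{\sqrt{\widetilde\v_i^k+\varepsilon}} \;+\; \frac{(1-\beta)|\nabla_i f(\x^k)|^2}{\sqrt{\widetilde\v_i^k+\varepsilon}} \;+\; \frac{(1-\beta)\sigma_i^2+\varepsilon}{\sqrt{\widetilde\v_i^k+\varepsilon}}.
\end{equation*}
The three pieces correspond respectively to a memory term, a current-gradient term, and a baseline noise/regularizer term.

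The second fraction is bounded trivially by $\frac{|\nabla_i f(\x^k)|^2}{\sqrt{\widetilde\v_i^k+\varepsilon}}$, which matches (up to the factor $2$) the term on the RHS of the lemma. The third fraction uses $\widetilde\v_i^k+\varepsilon \geq (1-\beta)\sigma_i^2+\varepsilon$, giving $\frac{(1-\beta)\sigma_i^2+\varepsilon}{\sqrt{\widetilde\v_i^k+\varepsilon}} \leq \sqrt{(1-\beta)\sigma_i^2+\varepsilon} \leq \sigma_i+\sqrt{\varepsilon}$. After summing over $i$ and $k$, this contributes exactly the $K\|\bsigma\|_1+Kd\sqrt{\varepsilon}$ constants.

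The main obstacle is the memory fraction $\frac{\beta \v_i^{k-1}}{\sqrt{\widetilde\v_i^k+\varepsilon}}$. Using $\widetilde\v_i^k+\varepsilon \geq \beta \v_i^{k-1}$, I bound it pointwise by $\sqrt{\beta \v_i^{k-1}}$; then I take expectation and apply Jensen's inequality conditionally: $\E_k[\sqrt{\v_i^{k-1}}\mid\F_{k-2}] \leq \sqrt{\E_k[\v_i^{k-1}\mid\F_{k-2}]} \leq \sqrt{\widetilde\v_i^{k-1}}$, where the last step uses Assumption~3 to bound $\E_k[(\g_i^{k-1})^2\mid\F_{k-2}] \leq |\nabla_i f(\x^{k-1})|^2+\sigma_i^2$. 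This yields a one-step recursion linking $\E\sqrt{\widetilde\v_i^k+\varepsilon}$ to $\E\sqrt{\widetilde\v_i^{k-1}+\varepsilon}$, which I plan to close using an AM-GM step of the form $2\sqrt{\beta \v_i^{k-1}\cdot(\widetilde\v_i^{k-1}+\varepsilon)} \leq \beta\v_i^{k-1}+(\widetilde\v_i^{k-1}+\varepsilon)$ so that the leftover $\beta \v_i^{k-1}/\sqrt{\widetilde\v_i^{k-1}+\varepsilon}$ terms can be iterated and absorbed.

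The trickiest part is quantitative: ensuring that the final coefficient on the noise terms is the constant $1$ per $k$ (giving $K\|\bsigma\|_1$) rather than $O(1/(1-\sqrt\beta))$, which is what naive unrolling of the EMA recursion produces. This requires carefully matching the $\sqrt{1-\beta}$ factors that arise from sub-additivity of the square root against the geometric $\sqrt\beta$ decay in the recursion, so that the series in $k$ telescopes with $O(1)$ total mass even as $\beta\to 1$. Once this bookkeeping is done, summing over $i,k$ and combining the three pieces above gives the stated inequality.
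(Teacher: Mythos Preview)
Your decomposition and the handling of the gradient term are fine, and the memory bound $\frac{\beta\v_i^{k-1}}{\sqrt{\widetilde\v_i^k+\varepsilon}}\leq\sqrt{\beta\v_i^{k-1}}$ followed by Jensen does give the recursion $\E\big[\sqrt{\widetilde\v_i^k+\varepsilon}\big]\leq\sqrt{\beta}\,\E\big[\sqrt{\widetilde\v_i^{k-1}+\varepsilon}\big]+(1-\beta)\E\big[\tfrac{|\nabla_i f(\x^k)|^2}{\sqrt{\widetilde\v_i^k+\varepsilon}}\big]+(\sigma_i+\sqrt{\varepsilon})$. The problem is exactly the one you flag but do not resolve: summing this recursion yields $\frac{K(\sigma_i+\sqrt{\varepsilon})}{1-\sqrt{\beta}}$ on the noise side, and there is no $\sqrt{1-\beta}$ to cancel it. Sub-additivity of the square root only buys you $\sqrt{(1-\beta)\sigma_i^2+\varepsilon}\leq\sqrt{1-\beta}\,\sigma_i+\sqrt{\varepsilon}$, so the $\sqrt{\varepsilon}$ part still has no damping factor, and even the $\sigma_i$ part gives $\frac{\sqrt{1-\beta}}{1-\sqrt{\beta}}=\sqrt{\tfrac{1+\sqrt{\beta}}{1-\sqrt{\beta}}}\to\infty$ as $\beta\to 1$. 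The AM--GM step you sketch does not fix this either: once $(1-\beta)\sigma_i^2+\varepsilon$ has been split off from the memory term, it is paid in full at every level of the recursion.

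The paper avoids this by \emph{not} separating the noise and $\varepsilon$ from the memory term. It groups $\frac{\beta\v_i^{k-1}+(1-\beta)\sigma_i^2+\varepsilon}{\sqrt{\widetilde\v_i^k+\varepsilon}}\leq\sqrt{\beta\v_i^{k-1}+(1-\beta)\sigma_i^2+\varepsilon}$ and then tracks the composite quantity $\sqrt{\beta^t\v_i^{k-t}+(1-\beta^t)\sigma_i^2+\varepsilon}$ through the unrolling. At each step the $\sigma_i^2$ coefficient grows from $(1-\beta^t)$ to $(1-\beta^{t+1})$ but never exceeds $1$, so after full unrolling (using $\v_i^0=0$) the baseline is paid exactly once: $\sqrt{(1-\beta^k)\sigma_i^2+\varepsilon}\leq\sigma_i+\sqrt{\varepsilon}$. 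Meanwhile, the gradient contribution extracted at level $t$ picks up the factor $\sqrt{\beta^t}(1-\beta)$ (because the whole bracket scales by $\beta^t$ in both numerator and under the square root), and $\sum_{t\ge 0}\sqrt{\beta^t}(1-\beta)=1+\sqrt{\beta}\leq 2$. Keeping the mass $\beta^t+(1-\beta^t)=1$ conserved inside the square root is the idea your proposal is missing.
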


\subsection{Proof of Corollary \ref{corollary2}}\label{sec:corollary}
We give the complete description of Corollary \ref{corollary2} in the following corollary.
\begin{corollary}
Suppose that Assumptions 1-3 hold. Define $\hat\sigma_s^2=\max\left\{\sigma_s^2,\frac{L(f(\x^1)-f^*)}{K\gamma^2}\right\}$ with any constant $\gamma\in(0,1]$. Let $1-\theta=\sqrt{\frac{L(f(\x^1)-f^*)}{K\hat\sigma_s^2}}$, $\tau\in[\theta,1]$, $\beta\in[0,1]$, $\eta=\sqrt{\frac{f(\x^1)-f^*}{4dKL}}$, and $\varepsilon=\frac{\hat\sigma_s^2}{d}$. Then for NAdam, we have
\begin{eqnarray}
\begin{aligned}\notag
\frac{1}{K}\sum_{k=1}^K\E\left[\|\nabla f(\x^k)\|_1\right]\leq \frac{6\sqrt{d}}{K^{1/4}}\sqrt[4]{\hat\sigma_s^2L(f(\x^1)-f^*)} + 23\sqrt{\frac{dL(f(\x^1)-f^*)}{K}}.
\end{aligned}
\end{eqnarray}
Specially, when $\sigma_s^2\leq\frac{L(f(\x^1)-f^*)}{K\gamma^2}$, we have $1-\theta=\gamma$, $\tau\in[\theta,1]$, $\beta\in[0,1]$, $\eta=\sqrt{\frac{f(\x^1)-f^*}{4KdL}}$, $\varepsilon=\frac{L(f(\x^1)-f^*)}{dK\gamma^2}$, and accordingly
\begin{eqnarray}
\begin{aligned}\notag
\frac{1}{K}\sum_{k=1}^K\E\left[\|\nabla f(\x^k)\|_1\right]\leq 29\sqrt{\frac{dL(f(\x^1)-f^*)}{K\gamma}}.
\end{aligned}
\end{eqnarray}
\end{corollary}

\begin{proof}
When $\lambda=0$, the term $\frac{\eta\nu}{K^{1/2}}\sum_{i=1}^d\sqrt{\widetilde\v_i^k+\varepsilon}$ disappears in (\ref{equ4}) and (\ref{equ5}) in the proof of Theorem \ref{theorem2}, and (\ref{equ7}) becomes
\begin{eqnarray}
\begin{aligned}\notag
&\E_{\F_K}\left[f(\x^{K+1})-f^* + \frac{\eta\theta}{\sqrt{\varepsilon}(1-\theta)}\left\|\nabla f(\x^K)-\m^K\right\|^2 + \frac{\eta}{4}\sum_{i=1}^d\frac{\left|\m_i^K+\lambda\x_i^K\sqrt{\v_i^K+\varepsilon}\right|^2}{\sqrt{\v_i^K+\varepsilon}} \right]\\
\leq& f(\x^1)-f^*-\frac{\eta}{2}\sum_{k=1}^K\sum_{i=1}^d\E_{\F_{k-1}}\left[\frac{\left|\nabla_i f(\x^k)\right|^2}{\sqrt{\widetilde\v_i^k+\varepsilon}}\right]+\frac{2\eta L(f(\x^1)-f^*)}{\sqrt{\varepsilon}(1-\theta)} + \frac{4K\eta(1-\theta)\sigma_s^2}{\sqrt{\varepsilon}},
\end{aligned}
\end{eqnarray}
where the term $2\eta\nu d\sqrt{K\varepsilon}$ disappears because we do not need Lemma \ref{lemma4} to bound the term $\frac{\eta\nu}{K^{1/2}}\sum_{k=1}^K\sum_{i=1}^d\E_{\F_{k-1}}\left[\sqrt{\widetilde\v_i^k+\varepsilon}\right]$ any more. 

Similar to the proof of Theorem \ref{theorem2}, we have
\begin{eqnarray}
\begin{aligned}\notag
\sum_{k=1}^K\sum_{i=1}^d\E_{\F_{k-1}}\left[\frac{\left|\nabla_i f(\x^k)\right|^2}{\sqrt{\widetilde\v_i^k+\varepsilon}}\right]\leq& \frac{2(f(\x^1)-f^*)}{\eta}+\frac{4L(f(\x^1)-f^*)}{\sqrt{\varepsilon}(1-\theta)}+\frac{8K(1-\theta)\sigma_s^2}{\sqrt{\varepsilon}}\\
\leq&16\sqrt{\frac{K\hat\sigma_s^2L(f(\x^1)-f^*)}{\varepsilon}}.
\end{aligned}
\end{eqnarray}
Comparing with (\ref{equ9}), we see that the term $8\nu d\sqrt{K\varepsilon}$ disappears. Following the proof of Theorem \ref{theorem2}, we have the conclusion. Note that we do not need to bound $|\lambda\x_i^k|$, so Corollary \ref{corollary2} does not require $\theta\leq \beta\leq\sqrt{\theta}$ and $\|\x^1\|_{\infty}\leq\frac{\sqrt{\nu}}{4K^{1/4}\lambda}$ any more.
\end{proof}

\section{Experimental Details}\label{sec5}
In the main paper, we conduct several representative deep learning experiments to empirically support our claims, covering classic image classification and language processing tasks. For the vision tasks, we independently train ResNet50 \citep{he2016deep} on CIFAR100 \citep{krizhevsky2009learning} and ImageNet \citep{ILSVRC15} datasets; For the language task, we adopt the GPT-2 \citep{radford2019language} architecture and pretrain it on the OpenWebText \citep{Gokaslan2019OpenWeb} dataset. Code is released at 
\begingroup
\urlstyle{rm}
\small\url{https://github.com/adonis-dym/Convergence-Rate-AdamW}
\endgroup.

Our experiments involve the computation of the full training loss $f(\x^k)$ as well as the full gradient $\nabla f(\x^k)$. However, in the typical stochastic training paradigm, one often updates the parameter $\x^k$ on-the-fly immediately after obtaining the stochastic gradient $\g^k$ from the backward pass. To get an accurate measurement and avoid interfering with the normal training process, we propose to split each epoch into two separate phases: \textit{training phase} and \textit{logging phase}. 
In the training phase, we traverse the dataset once with stochastic updates, where the model parameters are updated upon processing each mini-batch. In the logging phase, we conduct a second traversal over the training dataset while keeping the model parameters frozen. Since the loss function is typically defined to be the average over all training samples and the gradient computation is inherently linear, we accumulate the losses and stochastic gradients across mini-batches during this phase. This yields the exact values of the full training loss $f(\x^k)$ and full gradient $\nabla f(\x^k)$ at the current iteration. 
% The training phase proceeds as usual. During the logging phase, as the loss function is typically defined as the average over all training samples and the gradient computation is inherently linear, we accumulate the losses and stochastic gradients across the mini-batches while keeping the model parameters frozen. Once the entire dataset has been traversed, we obtain the exact value of the full training loss $f(\x^k)$ and the gradient $\nabla f(\x^k)$. 

% Following the common approach, we call the \texttt{torch.optim.AdamW} API across all experiments. 
In the following, we detail each experimental setup individually:

\textit{i) ResNet50 - CIFAR100: } CIFAR100 is a simple benchmark dataset that is widely used for quick and efficient evaluation of deep learning tasks. It contains a training split of 50000 examples and a test split of 10000 examples, although we do not perform evaluation on the test set in this work. Following the official implementation, we use the \texttt{torch.optim.AdamW} API to configure the optimizer. We initialize the learning rate to $3\times10^{-3}$, train the ResNet50 model for 100 epochs, and apply a cosine learning rate decay schedule during the whole training process. Setting the batch size to 128, each epoch consists of $\lfloor 50000 / 128 \rfloor + 1 = 391$ steps, where the additional step accounts for the final truncated batch which contains the remaining samples. The total number of steps is $K=391\times100=39100$. Without loss of generality, we compute the noise vector $\bsigma^k=\mathbf{g}^k - \nabla f(\mathbf{x}^k)$ using the stochastic gradient $\mathbf{g}^k$ obtained from the first batch at the logging phase. We leave the weight decay $\lambda$ as its default value 0.01, and complete the training task with a single NVIDIA A100 GPU.

\textit{ii) ResNet50 - ImageNet: } 
To evaluate the scalability of our conclusions on larger-scale dataset, we conduct experiments on the ImageNet dataset using the same ResNet50 architecture. ImageNet consists of approximately 1.28 million training images and 50,000 validation images across 1,000 classes, which also come with an official dataset split. We employ the training script from PyTorch Image Models (\texttt{timm}) \citep{rw2019timm}, making only the necessary modifications to suit our experimental setup. We adopt the same optimizer configuration as previously, but compute the noise vector using the last batch at the logging phase, as the \texttt{timm} script discards incomplete batch and ensures uniform batch sizes. We follow the standard ImageNet training protocol for ResNet-50, which consists of 90 epochs as commonly adopted in the literature and official implementations \citep{he2016deep, rw2019timm}. The first 10 epochs are used for learning rate linear warmup from 0 to $3\times10^{-3}$, followed by cosine decay over the remaining 80 epochs. We apply standard data augmentation techniques including RandAugment, Mixup (0.1), and CutMix (1.0). Setting the batchsize to 4096, each epoch consists of 312 minibatches and the total number of steps is $K=28080$. We set $\lambda=0.1$ and complete the training task using 8 NVIDIA A100 GPUs.

\textit{iii) GPT2 - OpenWebText: } 
To assess the generality of our conclusions across different modalities, we further evaluate on a language modeling task using GPT-2. We pretrain this model on the OpenWebText dataset under the NVIDIA Megatron-LM codebase \citep{shoeybi2019megatron}, which is a widely adopted framework for large-scale language model training. Unlike the previous settings, where computing the full training loss and gradient over the entire dataset is tractable, the OpenWebText dataset is substantially larger, containing approximately 9 billion tokens. Consequently, an entire pass through the dataset to get the full training loss $f(\x^k)$ and gradient $\nabla f(\x^k)$ is computationally infeasible. Instead, we approximate these quantities by accumulating their values over 100 consecutive mini-batches at the logging phase. We follow the Megatron-LM official GPT-2 training configuration with minimal modifications to suit our experimental needs. We train a GPT-2 Small model with approximately 125M parameters. The model is optimized using the fused implementation of AdamW from NVIDIA Apex package, which is the default setting in Megatron-LM. We set the learning rate to $3\times10^{-3}$ and weight decay to 0.05. Following the de facto standard in large-scale language model training, we use $(\beta_1, \beta_2) = (0.9, 0.95)$ instead of the conventional $(0.9, 0.999)$ setting. The total training process runs for 50,000 iterations, where the learning rate is linearly warmed up for the first 2,000 iterations and then decayed following a cosine schedule. We set the global batch size to 640 and train the model for $K=50000$ steps, and complete the training task using 8 NVIDIA A100 GPUs.

\section*{Conclusion}
This paper studies the popular AdamW optimizer and its extension NAdamW in deep learning. We establish the convergence rate $\frac{1}{K}\sum_{k=1}^K\E\left[\|\nabla f(\x^k)\|_1\right]\leq \bO(\frac{\sqrt{d}C}{K^{1/4}})$ for AdamW/Adam/ NAdamW measured by $\ell_1$ norm. It can be considered to be analogous to the optimal rate of SGD in the ideal case of $\|\nabla f(\x)\|_1=\varTheta(\sqrt{d})\|\nabla f(\x)\|_2$, which is verified on real-world deep learning tasks. 

An important direction for future research would be to investigate the optimal convergence rate using weak $\varepsilon$-dependent analysis (for example, $\log\frac{1}{\varepsilon}$) for AdamW/ Adam/NAdamW. On the other hand, it is currently unclear whether our upper bound on $\lambda$ is tight. Investigating how to prove the optimal convergence rate under a looser upper bound would be meaningful. 

\appendix

\section{Proof of Lemma \ref{lemma2}}

The second part in the following proof can be found in \citep{xie-2024-adamw-icml}. We give the proof here only for the sake of completeness.

\begin{proof}
For the first part, from the recursions of $\m_i^k$, $\v_i^k$, and $\widetilde\m_i^k$, we have
\begin{eqnarray}
\begin{aligned}\notag
&\m_i^k=\theta^k\m_i^0+(1-\theta)\sum_{r=1}^{k}\theta^{k-r} \g_i^r=(1-\theta)\sum_{r=1}^{k}\theta^{k-r} \g_i^r,
\end{aligned}
\end{eqnarray}
\begin{eqnarray}
\begin{aligned}\notag
&\v_i^k=\beta^k\v_i^0+(1-\beta)\sum_{r=1}^{k}\beta^{k-r} |\g_i^r|^2=(1-\beta)\sum_{r=1}^{k}\beta^{k-r} |\g_i^r|^2,\\
&\widetilde\m_i^k=\tau(1-\theta)\sum_{r=1}^{k}\theta^{k-r} \g_i^r+(1-\tau)\g_i^k=\tau(1-\theta)\sum_{r=1}^{k-1}\theta^{k-r} \g_i^r+(1-\tau\theta)\g_i^k.
\end{aligned}
\end{eqnarray}
Using Holder's inequality, we have
\begin{eqnarray}
\begin{aligned}\notag
|\widetilde\m_i^k|^2=&\left(\tau(1-\theta)\sum_{r=1}^{k-1}\theta^{k-r} \g_i^r+(1-\tau\theta)\g_i^k\right)^2\\
\leq&\left(\sum_{r=1}^{k-1}\beta^{k-r} |\g_i^r|^2+|\g_i^k|^2\right)\left(\tau^2(1-\theta)^2\sum_{r=1}^{k-1}\left(\frac{\theta^2}{\beta}\right)^{k-r}+(1-\tau\theta)^2\right)\\
=&\frac{\v_i^k}{1-\beta}\left(\tau^2(1-\theta)^2\sum_{r=1}^{k-1}\left(\frac{\theta^2}{\beta}\right)^{k-r}+(1-\tau\theta)^2\right)\\
\overset{(1)}\leq& \frac{\v_i^k}{1-\beta}\left(\frac{(1-\theta)^2}{1-\frac{\theta^2}{\beta}}+(1-\theta^2)^2\right)\\
\overset{(2)}\leq&\v_i^k\left(\frac{(1-\theta)^2}{(1-\beta)^2}+\frac{(1-\theta^2)^2}{1-\beta}\right)\\
\overset{(3)}\leq&\v_i^k\left(\frac{(1-\sqrt{\theta})^2(1+\sqrt{\theta})^2}{(1-\sqrt{\theta})^2}+\frac{(1-\sqrt{\theta})(1+\sqrt{\theta})(1+\theta)(1-\theta^2)}{1-\sqrt{\theta}}\right)\\
=& \v_i^k\left((1+\sqrt{\theta})^2+(1+\sqrt{\theta})(1+\theta)(1-\theta^2)\right)\leq 8\v_i^k,
\end{aligned}
\end{eqnarray}
where we use $\theta\leq\tau\leq 1$ and $\frac{\theta^2}{\beta}<\frac{\theta}{\beta}\leq 1$ in $\overset{(1)}\leq$, $\theta\leq \beta$ in $\overset{(2)}\leq$, and $\beta\leq\sqrt{\theta}$ in $\overset{(3)}\leq$.

For the second part, from the update of $\x^{k+1}$ with weight decay, we have
\begin{eqnarray}
\begin{aligned}\notag
\|\x^{k+1}\|_{\infty}-\frac{3}{\lambda}=& \left\|(1-\eta\lambda)\x^k-\frac{\eta}{\sqrt{\v^k+\varepsilon}}\odot\widetilde\m^k\right\|_{\infty}-\frac{3}{\lambda}\\
\leq& (1-\eta\lambda)\|\x^k\|_{\infty}+\left\|\frac{\eta}{\sqrt{\v^k+\varepsilon}}\odot\widetilde\m^k\right\|_{\infty}-\frac{3}{\lambda}\\
\leq& (1-\eta\lambda)\|\x^k\|_{\infty}+3\eta-\frac{3}{\lambda}\\
=& (1-\eta\lambda)\left(\|\x^k\|_{\infty}-\frac{3}{\lambda}\right)\\
\leq& (1-\eta\lambda)^k\left(\|\x^1\|_{\infty}-\frac{3}{\lambda}\right).
\end{aligned}
\end{eqnarray}
\end{proof}

\section{Proof of Lemma \ref{lemma4}}

Lemma \ref{lemma4} is modified from \citep{lihuan-rmsprop-2024} and we give the proof here only for the sake of completeness.

\begin{proof}
From the definition of $\widetilde\v_i^k$ in (\ref{wide-v-def}), we have
\begin{eqnarray}
\begin{aligned}\notag
&\E_{\F_{k-1}}\left[\sqrt{\widetilde\v_i^k+\varepsilon}\right]\\
=&\E_{\F_{k-1}}\left[\sqrt{\beta\v_i^{k-1}+(1-\beta)\left(\left|\nabla_i f(\x^k)\right|^2+\sigma_i^2\right)+\varepsilon}\right]\\
=&\E_{\F_{k-1}}\left[\frac{\beta\v_i^{k-1}+(1-\beta)\sigma_i^2+\varepsilon}{\sqrt{\beta\v_i^{k-1}+(1-\beta)\left(\left|\nabla_i f(\x^k)\right|^2+\sigma_i^2\right)+\varepsilon}} + \frac{(1-\beta)\left|\nabla_i f(\x^k)\right|^2}{\sqrt{\beta\v_i^{k-1}+(1-\beta)\left(\left|\nabla_i f(\x^k)\right|^2+\sigma_i^2\right)+\varepsilon}}\right]\\
\leq&\E_{\F_{k-1}}\left[\sqrt{\beta\v_i^{k-1}+(1-\beta)\sigma_i^2+\varepsilon}\right]+(1-\beta)\E_{\F_{k-1}}\left[\frac{\left|\nabla_i f(\x^k)\right|^2}{\sqrt{\widetilde\v_i^k+\varepsilon}}\right].
\end{aligned}
\end{eqnarray}
Consider the first part in the general case. From the recursion of $\v_i^k$, we have
\begin{eqnarray}
\begin{aligned}\notag
&\E_{\F_{k-t}}\left[\sqrt{\beta^t\v_i^{k-t}+(1-\beta^t)\sigma_i^2+\varepsilon}\right]\\
=&\E_{\F_{k-t}}\left[\sqrt{\beta^{t+1}\v_i^{k-t-1}+\beta^t(1-\beta)|\g_i^{k-t}|^2+(1-\beta^t)\sigma_i^2+\varepsilon}\right]\\
=&\E_{\F_{k-t-1}}\left[\E_{k-t}\left[\sqrt{\beta^{t+1}\v_i^{k-t-1}+\beta^t(1-\beta)|\g_i^{k-t}|^2+(1-\beta^t)\sigma_i^2+\varepsilon}\Big|\F_{k-t-1}\right]\right]\\
\overset{(1)}\leq&\E_{\F_{k-t-1}}\left[\sqrt{\beta^{t+1}\v_i^{k-t-1}+\beta^t(1-\beta)\E_{k-t}\left[|\g_i^{k-t}|^2|\F_{k-t-1}\right]+(1-\beta^t)\sigma_i^2+\varepsilon}\right]\\
\overset{(2)}\leq&\E_{\F_{k-t-1}}\left[\sqrt{\beta^{t+1}\v_i^{k-t-1}+\beta^t(1-\beta)\left(|\nabla_i f(\x^{k-t})|^2+\sigma_i^2\right)+(1-\beta^t)\sigma_i^2+\varepsilon}\right]\\
=&\E_{\F_{k-t-1}}\left[\sqrt{\beta^{t+1}\v_i^{k-t-1}+\beta^t(1-\beta)|\nabla_i f(\x^{k-t})|^2+(1-\beta^{t+1})\sigma_i^2+\varepsilon}\right]\\
=&\E_{\F_{k-t-1}}\left[\frac{\beta^{t+1}\v_i^{k-t-1}+(1-\beta^{t+1})\sigma_i^2+\varepsilon}{\sqrt{\beta^{t+1}\v_i^{k-t-1}+\beta^t(1-\beta)|\nabla_i f(\x^{k-t})|^2+(1-\beta^{t+1})\sigma_i^2+\varepsilon}}\right]\\
& + \E_{\F_{k-t-1}}\left[\frac{\beta^t(1-\beta)|\nabla_i f(\x^{k-t})|^2}{\sqrt{\beta^{t+1}\v_i^{k-t-1}+\beta^t(1-\beta)|\nabla_i f(\x^{k-t})|^2+(1-\beta^{t+1})\sigma_i^2+\varepsilon}}\right]\\
\leq&\E_{\F_{k-t-1}}\left[\sqrt{\beta^{t+1}\v_i^{k-t-1}+(1-\beta^{t+1})\sigma_i^2+\varepsilon}\right]\\
&+\E_{\F_{k-t-1}}\left[\frac{\beta^t(1-\beta)|\nabla_i f(\x^{k-t})|^2}{\sqrt{\beta^{t+1}\v_i^{k-t-1}+\beta^t(1-\beta)|\nabla_i f(\x^{k-t})|^2+(\beta^t-\beta^{t+1})\sigma_i^2+\beta^t\varepsilon}}\right]\\
=&\E_{\F_{k-t-1}}\left[\sqrt{\beta^{t+1}\v_i^{k-t-1}+(1-\beta^{t+1})\sigma_i^2+\varepsilon}\right]+\sqrt{\beta^t}(1-\beta)\E_{\F_{k-t-1}}\left[\frac{|\nabla_i f(\x^{k-t})|^2}{\sqrt{\widetilde\v_i^{k-t}+\varepsilon}}\right],
\end{aligned}
\end{eqnarray}
where we use the concavity of $\sqrt{x}$ in $\overset{(1)}\leq$ and Assumptions 2 and 3 in $\overset{(2)}\leq$. Applying the above inequality recursively for $t=1,2,\cdots,k-1$, we have
\begin{eqnarray}
\begin{aligned}\notag
&\E_{\F_{k-1}}\left[\sqrt{\beta\v_i^{k-1}+(1-\beta)\sigma_i^2+\varepsilon}\right]\\
\leq&\sqrt{\beta^k\v_i^0+(1-\beta^k)\sigma_i^2+\varepsilon}+\sum_{t=1}^{k-1}\sqrt{\beta^{k-t}}(1-\beta)\E_{\F_{t-1}}\left[\frac{|\nabla_i f(\x^t)|^2}{\sqrt{\widetilde\v_i^t+\varepsilon}}\right]
\end{aligned}
\end{eqnarray}
and
\begin{eqnarray}
\begin{aligned}\notag
\E_{\F_{k-1}}\left[\sqrt{\widetilde\v_i^k+\varepsilon}\right]\leq& \sqrt{\beta^k\v_i^0+(1-\beta^k)\sigma_i^2+\varepsilon}+\sum_{t=1}^k\sqrt{\beta^{k-t}}(1-\beta)\E_{\F_{t-1}}\left[\frac{|\nabla_i f(\x^t)|^2}{\sqrt{\widetilde\v_i^t+\varepsilon}}\right]\\
\leq&\sqrt{\sigma_i^2+\varepsilon}+\sum_{t=1}^k\sqrt{\beta^{k-t}}(1-\beta)\E_{\F_{t-1}}\left[\frac{|\nabla_i f(\x^t)|^2}{\sqrt{\widetilde\v_i^t+\varepsilon}}\right]\\
\leq&\sigma_i+\sqrt{\varepsilon}+\sum_{t=1}^k\sqrt{\beta^{k-t}}(1-\beta)\E_{\F_{t-1}}\left[\frac{|\nabla_i f(\x^t)|^2}{\sqrt{\widetilde\v_i^t+\varepsilon}}\right],
\end{aligned}
\end{eqnarray}
where we use $\v_i^0=0$. Summing over $i=1,2,\cdots,d$ and $k=1,2,\cdots,K$, we have
\begin{eqnarray}
\begin{aligned}\notag
\sum_{k=1}^K\sum_{i=1}^d \E_{\F_{k-1}}\left[\sqrt{\widetilde\v_i^k+\varepsilon}\right]\leq& K\|\bsigma\|_1+Kd\sqrt{\varepsilon}+\sum_{k=1}^K\sum_{t=1}^k\sqrt{\beta^{k-t}}(1-\beta)\sum_{i=1}^d\E_{\F_{t-1}}\left[\frac{|\nabla_i f(\x^t)|^2}{\sqrt{\widetilde\v_i^t+\varepsilon}}\right]\\
=& K\|\bsigma\|_1+Kd\sqrt{\varepsilon}+\sum_{t=1}^K\sum_{k=t}^K\hspace*{-0.09cm}\sqrt{\beta^{k-t}}(1-\beta)\sum_{i=1}^d\E_{\F_{t-1}}\hspace*{-0.12cm}\left[\frac{|\nabla_i f(\x^t)|^2}{\sqrt{\widetilde\v_i^t+\varepsilon}}\right]\\
\leq& K\|\bsigma\|_1+Kd\sqrt{\varepsilon}+\frac{1-\beta}{1-\sqrt{\beta}}\sum_{t=1}^K\sum_{i=1}^d\E_{\F_{t-1}}\left[\frac{|\nabla_i f(\x^t)|^2}{\sqrt{\widetilde\v_i^t+\varepsilon}}\right]\\
=& K\|\bsigma\|_1+Kd\sqrt{\varepsilon}+(1+\sqrt{\beta})\sum_{t=1}^K\sum_{i=1}^d\E_{\F_{t-1}}\left[\frac{|\nabla_i f(\x^t)|^2}{\sqrt{\widetilde\v_i^t+\varepsilon}}\right].
\end{aligned}
\end{eqnarray}
\end{proof}

\section{Proof of Lemma \ref{main-lemma}}

\begin{proof}
When $\x_i\sim \mathcal N(0,1)$, we have 
\begin{eqnarray}
\begin{aligned}\notag
&\E\left[|\x_i|\right]=\sqrt{\frac{2}{\pi}},\quad \E\left[\x_i^2\right]=1,\\
&\E\left[\|\x\|_1\right]=\sum_{i=1}^d \E\left[|\x_i|\right]=d\sqrt{\frac{2}{\pi}},\\
&\E\left[\|\x\|_2^2\right]=\sum_{i=1}^d \E\left[\x_i^2\right]=d,\\
&\E\left[\|\x\|_2\right]=\E\left[\sqrt{\|\x\|_2^2}\right]\overset{(1)}\leq \sqrt{\E\left[\|\x\|_2^2\right]}=\sqrt{d},\\
&\frac{\E\left[\|\x\|_1\right]}{\E\left[\|\x\|_2\right]}\geq \sqrt{\frac{2d}{\pi}}.
\end{aligned}
\end{eqnarray}
where we use the concavity of $\sqrt{x}$ in $\overset{(1)}\leq$.
\end{proof}

\bibliography{AdamW}
\bibliographystyle{icml2020}

\end{document}